\documentclass{article} 
\usepackage[final]{colm2026_conference}

\usepackage{tcolorbox}

\usepackage{xcolor}
\usepackage{microtype}
\usepackage{hyperref}
\usepackage{url}
\usepackage{graphicx}
\usepackage{latexsym}
\usepackage{stfloats}
\usepackage[T1]{fontenc}      
\usepackage[utf8]{inputenc}   
\usepackage{caption}
\usepackage{soul}
\usepackage{amsmath,amsfonts,amssymb}
\usepackage{amsthm}
\usepackage{subcaption}
\usepackage{wrapfig}
\usepackage{booktabs}
\usepackage[capitalize]{cleveref}
\usepackage{xspace}
\usepackage{multirow}
\usepackage[normalem]{ulem}
\usepackage{cuted}
\usepackage[compact]{titlesec}
\usepackage{makecell}

\newtheorem{lemma}{Lemma}
\newtheorem{proposition}{Proposition}

\titlespacing*{\section}{0pt}{9pt plus 2pt minus 2pt}{4pt plus 1pt}
\titlespacing*{\subsection}{0pt}{7pt plus 2pt minus 2pt}{3pt plus 1pt}
\tcbset{insightbox/.style={colback=gray!10,colframe=black!40,boxsep=1pt,top=2pt,bottom=2pt,left=5pt,right=5pt,before skip=4pt,after skip=4pt}}
\newcommand{\indicator}[1]{\mathbf{1}\!\left[#1\right]}

\newcommand{\ahuji}{$^{\heartsuit}$}

\newcommand{\auiuc}{$^{\diamondsuit}$}
\newcommand{\ausc}{$^{\spadesuit}$}
\newcommand{\atechnion}{$^{\clubsuit}$}
\newcommand{\aspace}{\hspace{0.8em}}

\definecolor{darkgreen}{rgb}{0.0, 0.5, 0.0}

\newcommand{\resolved}[1]{}

\newcommand{\appref}[1]{Appendix~\ref{#1}}
\newcommand{\secref}[1]{§\ref{sec:#1}}

\definecolor{mplblue}{HTML}{0072B2}
\colorlet{mplblue}{mplblue!80!black}
\definecolor{mplorange}{HTML}{d55e00}
\colorlet{mplorange}{mplorange!85!white}
\definecolor{mplgreen}{HTML}{CC79A7}

\newcommand{\Known}[1]{\textcolor{mplblue}{#1}}
\newcommand{\Unk}[1]{\textcolor{mplorange}{#1}}
\newcommand{\Held}[1]{\textcolor{mplgreen}{#1}}

\newcommand{\KnownM}[1]{\textcolor{mplblue}{\ensuremath{#1}}}
\newcommand{\UnkM}[1]{\textcolor{mplorange}{\ensuremath{#1}}}
\newcommand{\HeldM}[1]{\textcolor{mplgreen}{\ensuremath{#1}}}

\newcommand{\DKnown}{\KnownM{\mathcal{D}_{\text{Known}}}}
\newcommand{\DUnk}{\UnkM{\mathcal{D}_{\text{Unk}}}}
\newcommand{\DHeld}{\HeldM{\mathcal{D}_{\text{Held}}}}

\newcommand{\KnownFacts}{\Known{Known facts}}
\newcommand{\UnkFacts}{\Unk{Unknown facts}}
\newcommand{\HeldFacts}{\Held{Held-out facts}}

\definecolor{tokA}{RGB}{196,181,253}  
\definecolor{tokB}{RGB}{134,239,172}  
\definecolor{tokC}{RGB}{253,224,71}   
\definecolor{tokD}{RGB}{252,165,165}  

\definecolor{zhen}{rgb}{0.08, 0.38, 0.74}

\newcommand{\figicon}[1]{%
    \includegraphics[height=1em]{#1}%
}

\usepackage{lineno}

\definecolor{darkblue}{rgb}{0, 0, 0.5}
\hypersetup{colorlinks=true, citecolor=darkblue, linkcolor=darkblue, urlcolor=darkblue}

\title{\scalebox{0.97}{Why Fine-Tuning Encourages Hallucinations and How to Fix It}}

\author{%
  \textbf{Guy Kaplan}\ahuji\thanks{Correspondence: \texttt{guy.kaplan3@mail.huji.ac.il}} \aspace
  \textbf{Zorik Gekhman}\atechnion \aspace
  \textbf{Zhen Zhu}\auiuc\thanks{Now at Google DeepMind.} \aspace
  \textbf{Lotem Rozner}\ahuji \\[0.1em]
  \textbf{Yuval Reif}\ahuji \aspace
  \textbf{Swabha Swayamdipta}\ausc \aspace
  \textbf{Derek Hoiem}\auiuc \aspace
  \textbf{Roy Schwartz}\ahuji \\[0.15em]
  \footnotesize
  \ahuji{}Hebrew University of Jerusalem \aspace
  \auiuc{}University of Illinois Urbana-Champaign \\[0.1em]
  \atechnion{}Technion -- Israel Institute of Technology \aspace
  \ausc{}University of Southern California
}

\begin{document}

\ifcolmsubmission
\linenumbers
\fi

\maketitle

\begin{abstract}
Large language models are prone to hallucinating factually incorrect statements. A key source of these errors is exposure to new factual information through supervised fine-tuning (SFT), which can increase hallucinations w.r.t.~knowledge acquired during pre-training. Since these errors arise as a by-product of knowledge degradation, we explore whether established continual learning tools can mitigate them. We propose a \textit{self-distillation}–based SFT method that facilitates effective factual learning while minimizing hallucinations w.r.t.~pre-existing knowledge by regularizing output-distribution drift. We also show that when new knowledge acquisition is unnecessary, suppressing factual plasticity by freezing parameter groups preserves task performance while reducing hallucinations. Lastly, we investigate the mechanism, contrasting capacity limitations, behavior cloning, and localized interference. Our experiments show that a main driver is interference among overlapping semantic representations, which self-distillation mitigates and an associative-memory model explains: forgetting grows with the overlap between new and stored facts.\footnote{Code: \url{https://github.com/schwartz-lab-NLP/finetuning-hallucinations}}

\end{abstract}

\begin{figure}[h]
   \centering
   \includegraphics[width=0.92\columnwidth]{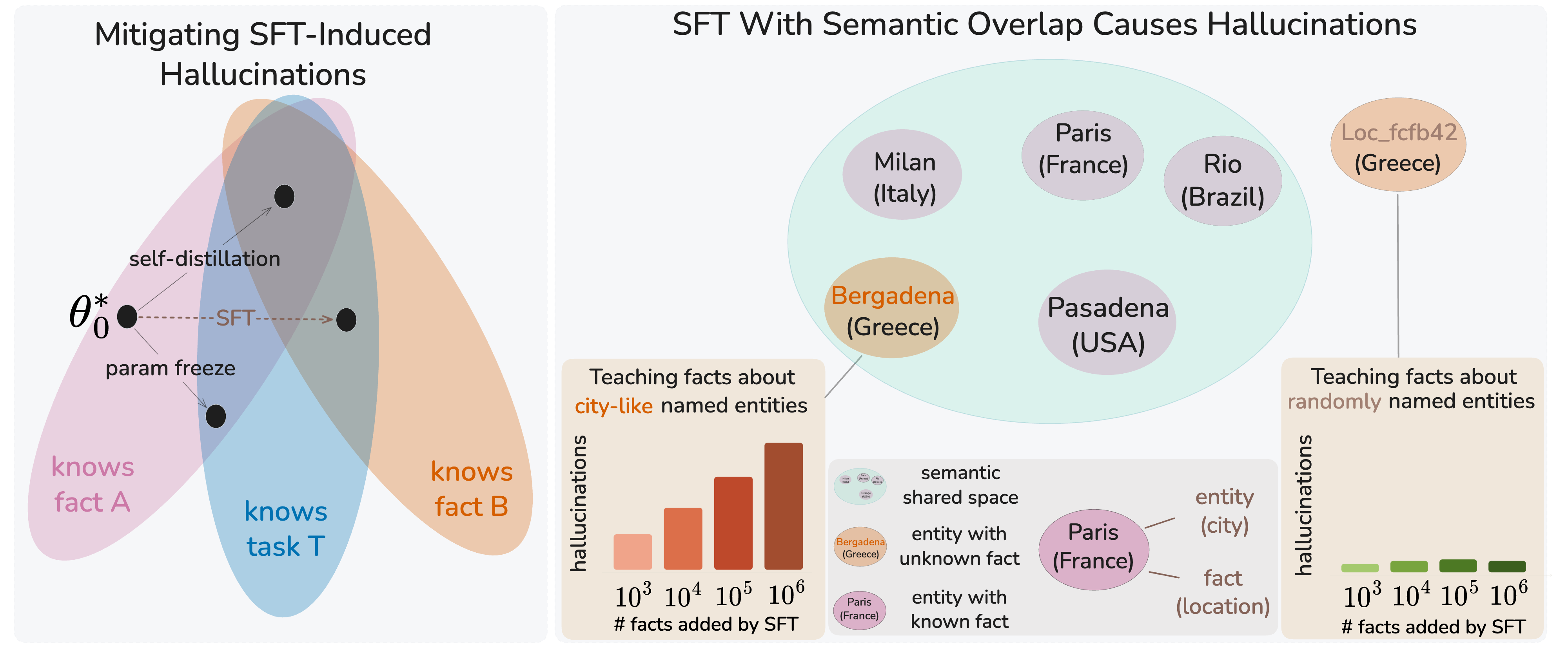}
\caption{
\textbf{Left:} SFT-induced hallucinations as factual forgetting in parameter space, starting from $\theta_0^*$. Regions denote low error on \textcolor{mplgreen}{preexisting facts (A)}, \textcolor{mplblue}{the task (T)} (e.g., QA), and \textcolor{mplorange}{new facts (B)}. Standard SFT acquires both \textcolor{mplblue}{the new task} and \textcolor{mplorange}{new facts} but forgets \textcolor{mplgreen}{existing facts}; parameter freezing SFT learns \textcolor{mplblue}{the new task}, preserves \textcolor{mplgreen}{existing facts}, but forgets \textcolor{mplorange}{new ones}; self-distillation achieves all three goals. \textbf{Right:} SFT on semantically overlapping entities causes hallucinations on related existing ones: after learning that \textit{``Bergadena''} (\figicon{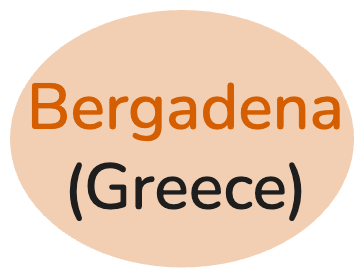}; a city-like fictional name) is in Greece, the model hallucinates about real cities like Milan (\figicon{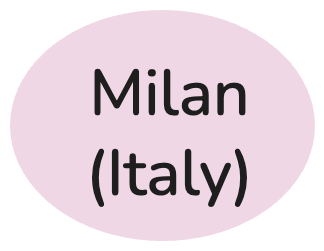}), while mapping \texttt{Loc\_fcfb42} (\figicon{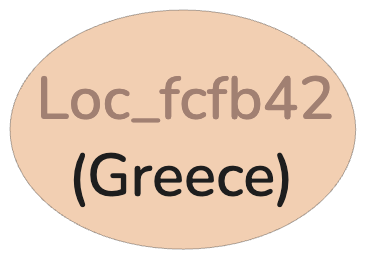}) to Greece causes no such effect, even with many new facts.}
  \label{fig:dynamics}
  \end{figure}

\section{Introduction}

Recent studies show that when models learn new factual knowledge via supervised fine-tuning~(SFT), they also start to produce incorrect answers to questions that they previously answered correctly~\citep{gekhman2024doesfinetuningllmsnew, kalai2025languagemodelshallucinate}. This is concerning, as SFT is standard practice in LLM development and hallucinations pose a significant challenge for application reliability~\citep{Huang_2025}. In parallel, the continual learning literature has extensively studied how sequential training interferes with previously acquired knowledge~\citep{Kirkpatrick_2017, kim2025measuringrepresentationalshiftscontinual} and has proposed a range of mitigation strategies~\citep{delange2021continuallearningsurveydefying, guo2025continuallearninggenerativeai, wang2024comprehensivesurveycontinuallearning}. In this work, we explore whether established continual learning tools can mitigate SFT-induced hallucinations.\footnote{Throughout this work, \emph{hallucination} refers to an incorrect answer
to a question the model answered correctly before fine-tuning
\citep{gekhman2024doesfinetuningllmsnew, ovadia2024finetuningretrievalcomparingknowledge,  zucchet2025languagemodelslearnfacts}.}

In continual learning, forgetting typically arises as a byproduct of acquiring new information: parameter updates during training alter the model in ways that degrade previously encoded knowledge. Analogously, we propose that \emph{factual forgetting} occurs when parameter updates introduced during SFT inadvertently distort representations of facts learned during pre-training. This reflects a stability–plasticity tradeoff~\citep{kim2023achievingbetterstabilityplasticitytradeoff}: increasing \emph{factual plasticity} (the ability to acquire new facts) may come at the expense of \emph{factual stability} (the ability to preserve existing facts), manifesting as SFT-induced hallucinations.
We perform controlled experiments to disentangle this effect from task learning, adopting the experimental setup of \citet{gekhman2024doesfinetuningllmsnew} and reproducing their finding that hallucinations increase when exposing models to new factual knowledge through SFT~(\secref{setting}).

Since different parameter groups play distinct roles in factual storage and task learning~\citep{geva2021transformerfeedforwardlayerskeyvalue, dar2023analyzingtransformersembeddingspace, zhu2025teachlargemultimodalmodels}, we first demonstrate that reducing factual plasticity---e.g., by freezing parameter groups---lets the model learn the downstream task while limiting new fact acquisition and hallucinations~(\secref{Modules_ablations}). In practice, however, we may want SFT to support task learning \emph{and} new factual acquisition without inducing hallucinations---precisely the goal of continual learning methods for mitigating forgetting.

To test this, in \secref{continual_learning} we apply \emph{self-distillation}~\citep{li2017learningforgetting}, which regularizes the model to stay close to its own earlier output distribution during fine-tuning, recently shown to reduce forgetting in LLMs~\citep{shenfeld2026selfdistillationenablescontinuallearning, zhu2025teachlargemultimodalmodels}. We find that this reduces SFT-induced hallucinations while still enabling effective acquisition of new facts~(\cref{fig:dynamics},~left).

We next investigate the mechanisms underlying SFT-induced hallucinations (\secref{semantic_interference}): do these errors stem from global capacity limitations~\citep{allenzhu2024physicslanguagemodels33}, from behavior cloning~\citep{zhang2024alleviatinghallucinationslargelanguage, Schulman2023}, or from localized interference, whereby new facts corrupt existing ones that share representational structure? 

To disentangle these, we fine-tune on synthetic facts while independently varying two factors: the number of new facts, and whether their entity names are name-like strings that share representational neighborhoods with existing entities or random UUID-style identifiers that do not (\cref{fig:dynamics}, right). Forgetting tracks the second factor: name-like entities induce forgetting that grows sharply with scale, while UUIDs induce near-zero forgetting even at $10^6$ new facts. We further show that self-distillation prevents the representational drift of held-out entities, suggesting it mitigates the same interference, and develop an associative-memory account of why overlap governs forgetting.

In summary, this work makes three contributions. (1) It reframes SFT-induced hallucinations as factual forgetting arising from continual learning dynamics, distinct from hallucinations stemming from pre-training knowledge gaps or arising at inference time. (2) It provides two complementary mitigations: reducing factual plasticity (e.g., via selective parameter freezing) when new fact acquisition is undesirable, and self-distillation when it is desired; both reduce factual forgetting from $\sim$15 to 2 pp and transfer to realistic instruction tuning on Alpaca~(\secref{alpaca}). (3) It characterizes the mechanism: factual forgetting appears selective, driven by interference among overlapping semantic representations. An associative-memory account predicts this from the overlap between new and stored facts, and self-distillation appears to mitigate it.

\section{Fine-Tuning with Unknown Facts Leads to Factual Forgetting}
\label{sec:setting}

Supervised fine-tuning (SFT) can inadvertently increase factual hallucinations~\citep{gekhman2024doesfinetuningllmsnew, ovadia2024finetuningretrievalcomparingknowledge, zucchet2025languagemodelslearnfacts}, a phenomenon we reinterpret as factual forgetting. To study it in a controlled manner, we reconstruct the setting of \citet{gekhman2024doesfinetuningllmsnew}, which disentangles task learning from factual learning.

\begin{figure*}[t]
   \centering
   \begin{minipage}{0.52\linewidth}
       \centering
       \includegraphics[width=\linewidth]{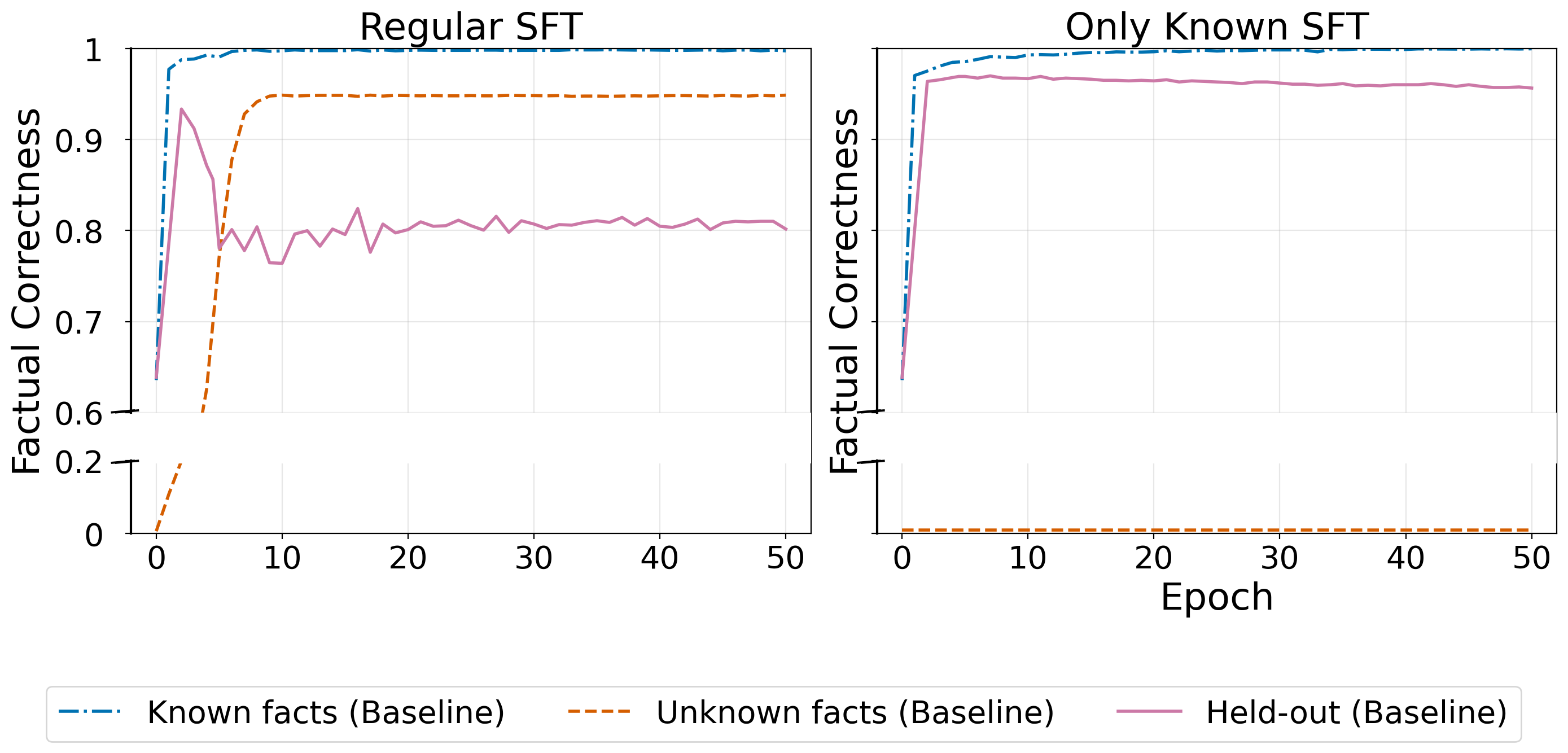}
   \end{minipage}
   \hfill
   \begin{minipage}{0.46\linewidth}
       \centering
       \scriptsize
       \begin{tabular}{@{}l@{\hspace{4pt}}c@{\hspace{4pt}}c@{\hspace{4pt}}c@{}}
           \toprule
           \textbf{Split} & \textbf{SLiCK} & \textbf{Train} & \textbf{Role} \\
           \midrule
           $\DKnown$ & HighlyKnown & Yes & Task learning \\
           $\DUnk$   & Unknown     & Yes & Facts plasticity \\
           $\DHeld$  & HighlyKnown & No  & Facts stability \\
           \bottomrule
       \end{tabular}
   \end{minipage}
\caption{\textbf{Factual forgetting is caused by new fact acquisition, not fine-tuning itself.} The model first learns the QA format (high accuracy on \textcolor{mplblue}{\textbf{known facts}}); as \textcolor{mplorange}{\textbf{unknown facts}} are then acquired, accuracy on \textcolor{mplgreen}{\textbf{held-out facts}} declines. When unknown facts are excluded~(Only Known), held-out performance remains stable. \textbf{Right:} summary of data split roles.
}
   \label{fig:baseline}
\end{figure*}

\subsection{Preliminary: SLiCK Method and Factual Learning Setting}
\label{sec:Methodology}

To determine the model's preexisting knowledge for each question, we apply the SLiCK method~\citep{gekhman2024doesfinetuningllmsnew}, which categorizes questions into four levels based on the model's predictions under multiple randomized few-shot prompting configurations: a fact is \emph{HighlyKnown} if the model consistently produces the correct answer across all configurations, \emph{Unknown} if it never does, with the intermediate \emph{MaybeKnown} and \emph{WeaklyKnown} capturing varying degrees of consistency.
To focus on factual learning and forgetting, we retain only \emph{HighlyKnown} and \emph{Unknown} examples, filtering out \emph{MaybeKnown} and \emph{WeaklyKnown} facts.\footnote{See \appref{app:other_clf} for further discussion on this choice and results for the other classification groups.} We denote by $\DKnown$ a subset of \emph{HighlyKnown} facts and by $\DUnk$ a subset of \emph{Unknown} facts, and form the training set as $\mathcal{D}_{\text{train}} = \DKnown \cup \DUnk$. A disjoint subset of \emph{HighlyKnown} facts, denoted $\DHeld$, is reserved for evaluation. Since the data consists of sparse, semantically isolated relational facts (e.g.,  a state's capital, the language of a written work), generalization across examples is unlikely, and accuracy on each split reflects a distinct and isolated aspect of the model's behavior.
Performance on $\DKnown$ reflects task-format adaptation rather than new factual learning, since these facts already exist in pretrained knowledge (\Known{task learning}); accuracy on $\DUnk$ measures the acquisition of new factual knowledge (\Unk{factual plasticity}); and performance on $\DHeld$ captures the stability of previously acquired knowledge, directly quantifying fine-tuning--induced hallucinations (\Held{factual stability}). See~\cref{fig:baseline}, right.

\subsection{Methodology and Experimental Setup}

\paragraph{Data}
We use \textsc{EntityQuestions}~\citep{sciavolino-etal-2021-simple}, QA pairs derived from relational triplets in Wikipedia~(e.g., Q: ``What is the capital of France?'', A: ``Paris''), covering a wide range of entity--relation types.
We sample 8,000 examples each for $\DKnown$ and $\DUnk$ from the training split; $\DHeld$ is drawn from the development split and contains only \emph{HighlyKnown} facts of the same relations (SLiCK setup in the Reproducibility Statement).

\paragraph{Models}
We conduct experiments with several non-reasoning LLMs: \textsc{Qwen}~2.5 (1.5B and 7B parameters; ~\citealp{qwen2025qwen25technicalreport}) and \textsc{LLaMA}~3.1 (8B;~\citealp{grattafiori2024llama3herdmodels}). Unless otherwise stated, we report figures and representative results for \textsc{Qwen}~2.5-1.5B. All qualitative trends are consistent across models; full results are provided in~\appref{app:different_models}.

\paragraph{Training Procedure}
Models are fine-tuned with learning rate $5\times10^{-5}$ (selected by jointly evaluating time-to-learn on unknown facts and the induced hallucination rate across candidate rates) on \(\mathcal{D}_{\text{train}}\), with either a 50/50 Known/Unknown ratio or an Only Known setting~(\(\DUnk = \emptyset\)), evaluating throughout on $\DKnown$, $\DUnk$, and $\DHeld$.

\section{Factual Forgetting and the Factual Stability-Plasticity Tradeoff in SFT}
\label{sec:Modules_ablations}

\begin{wraptable}{r}{0.50\textwidth}
\vspace{-0.8em}
\centering
\small
\begin{tabular}{@{}l@{\hspace{8pt}}c@{\hspace{5pt}}c@{\hspace{5pt}}c@{}}
\toprule
Updated params ($\theta_S$) & $\DUnk$ & $\DKnown\uparrow$ & $\DHeld\uparrow$ \\
\midrule
Attention & 0.010 & 0.946 & 0.931 \\
FFN       & 0.941 & 0.997 & 0.782 \\
\midrule
All (standard SFT) & 0.946 & 0.990 & 0.780 \\
All (Only Known)   & --- & 0.999 & 0.958 \\
\bottomrule
\end{tabular}
\caption{
\textbf{Training only attention layers reduces factual forgetting.}
Updating only attention achieves high $\DHeld$ (low forgetting) and high $\DKnown$ (task learning) by suppressing new fact acquisition ($\DUnk \approx 0$); FFN-only closely tracks standard SFT.
}
\label{tab:freezing}
\vspace{-3em}
\end{wraptable} We now study how factual forgetting can be controlled. We first characterize standard SFT on $\DKnown \cup \DUnk$ against an Only Known variant that teaches the QA format without new factual content. We then test whether selectively freezing modules preserves task performance while limiting factual updates---a stability-first regime relevant when learning new facts is unnecessary (e.g., alignment or privacy).

\subsection{Learning New Facts Induces Factual Forgetting and Hallucinations}
\label{sec:regular_sft}

Our baseline results under standard SFT on $\DKnown \cup \DUnk$ (\cref{fig:baseline}) show a consistent two-stage pattern. During the first one to two epochs the model rapidly acquires the QA task format, achieving near-perfect accuracy on $\DKnown$ and peaking at 93\% on $\DHeld$. As it then begins to acquire new factual knowledge (rising $\DUnk$), performance on $\DHeld$ declines by approximately 15 pp, consistent with \citet{gekhman2024doesfinetuningllmsnew}; the decline saturates once learning on $\DUnk$ converges.
To validate that this effect is causally driven by factual learning, we repeat the experiment while excluding $\DUnk$ from training (dashed curves in \cref{fig:baseline}): performance on $\DHeld$ remains stable throughout, indicating that the increase in hallucinations arises from acquiring new factual information rather than from fine-tuning itself; the Only Known condition thus serves as a baseline free of forgetting due to factual plasticity. Taken together, these results demonstrate that standard fine-tuning induces factual forgetting once the model begins integrating new facts.

\subsection{Controlling Factual Plasticity via Parameter Constraints}

We next ask whether the model can acquire the task (QA format and instruction style) with minimal acquisition of new facts. This is relevant in the many practical settings where the goal is task adaptation while preserving pretrained knowledge, and learning new facts from the training set is unnecessary or even undesirable.
Motivated by evidence that attention and FFN layers contribute differently to task learning and factual storage~\citep{geva2021transformerfeedforwardlayerskeyvalue, dar2023analyzingtransformersembeddingspace, chughtai2024summingfactsadditive, zhu2025teachlargemultimodalmodels}, we freeze different modules using the same setup and ask whether the model learns the QA task while avoiding factual updates.

Let $\theta_S$ denote the subset of parameters updated during fine-tuning, with the remaining parameters frozen. \Cref{tab:freezing} summarizes the results for $\theta_S \in \{\text{attention only}, \text{FFN only}, \text{all}\}$.\footnote{Fine-grained ablations over smaller parameter groups are reported in~\appref{app:different_modules}.} 
Standard SFT exhibits coupling between factual plasticity and forgetting, with $\DHeld$ declining as $\DUnk$ increases.
Training only the FFN layers leaves this coupling essentially unchanged: forgetting on $\DHeld$ closely tracks the full-model SFT baseline. In contrast, updating only attention prevents new fact acquisition while preserving task learning: $\DUnk$ stays near chance, $\DHeld$ remains close to its pretrained level, and $\DKnown$ approaches the Only Known upper bound.\footnote{Accuracy is slightly below Only Known (0.95 vs.~0.99 on \DKnown), consistent with \citet{zhu2025teachlargemultimodalmodels}.}

\begin{tcolorbox}[insightbox,
title=\textbf{Insight 1: Factual and Task Plasticity Are Separable (\secref{Modules_ablations})}]
\label{insight1}
Selective parameter freezing reduces hallucinations while retaining task learning.
\end{tcolorbox}

\section{Enabling Factual Learning \textit{without} Forgetting Facts}
\label{sec:continual_learning}

Reducing factual plasticity prevents forgetting (\secref{Modules_ablations}), but many realistic settings require acquiring new factual knowledge. We therefore turn to the stability--plasticity dilemma of continual learning: models must learn new facts while preserving stored knowledge. We address this regime with \emph{self-distillation}.

\subsection{Self-Distillation Enables Fine-Tuning with Minimal Induced Hallucinations}
\label{sec:self-distillation}

\textbf{Self-distillation} regularizes fine-tuning by constraining shifts in the model’s output distribution, limiting the drift induced by new data; we adopt the formulation of \citet{li2017learningforgetting} as implemented for multi-modal models by \citet{zhu2025teachlargemultimodalmodels}.
Let $\theta_{i}$ be a frozen teacher (a snapshot after $i$ epochs of SFT) and $\theta$ the student being fine-tuned:
\begin{equation}
\mathcal{L}(\theta)
=
\mathcal{L}_{\text{task}}(\theta)
+
\lambda\, \mathcal{L}_{\text{distill}}(\theta;\theta_{i}),
\label{eq:lwf_total}
\end{equation}
where $\mathcal{L}_{\text{task}}$ is the standard next-token prediction loss.

The distillation loss penalizes divergence between the output distributions:
\begin{equation}
\mathcal{L}_{\text{distill}}(\theta;\theta_{i})
=
\mathbb{E}_{(x,y)\sim{\mathcal{B}}}
\Big[
\tfrac{\tau^{2}}{|M(y)|}
{\textstyle\sum_{j \in M(y)}}
\mathrm{KL}
\big(
\mathrm{softmax}(z_{\theta_{i},j}/\tau)
\,\|\,
\mathrm{softmax}(z_{\theta,j}/\tau)
\big)
\Big],
\label{eq:lwf_distill}
\end{equation}
where $z_{\theta,j}$ and $z_{\theta_{i},j}$ are the student's and teacher's logits at token position $j$, $\tau$ is a temperature, $M(y)$ the non-padded token positions, and $\mathcal{B}$ the training batch.
Intuitively, the term keeps the student close to the teacher's output
distribution, enabling factual learning without the distributional
shift that would disrupt previously learned facts.
Equivalently, \cref{eq:lwf_distill} can be viewed as \emph{KL regularization} toward a lightly-adapted anchor, a trust-region-style constraint that connects the method to distillation as a regularizer~\citep{hinton2015distillingknowledgeneuralnetwork}, to KL-anchored preference optimization~\citep{ouyang2022traininglanguagemodelsfollow}, and to evidence that KL-constrained updates reduce forgetting~\citep{shenfeld2025rlsrazoronlinereinforcement}. We retain the self-distillation terminology for continuity with the continual learning literature.
In practice, we train on $\DKnown$ for one epoch ($i=1$) to acquire the QA format while preserving factual stability, freeze this model as the teacher, and continue SFT on $\mathcal{D}_{\text{train}} = \DKnown \cup \DUnk$ with $\lambda=1$, $\tau=0.5$.\footnote{See \appref{app:different_sd_params} for results on different hyperparameters.}

\begin{figure}[t]
   \centering
   \includegraphics[width=0.74\columnwidth]{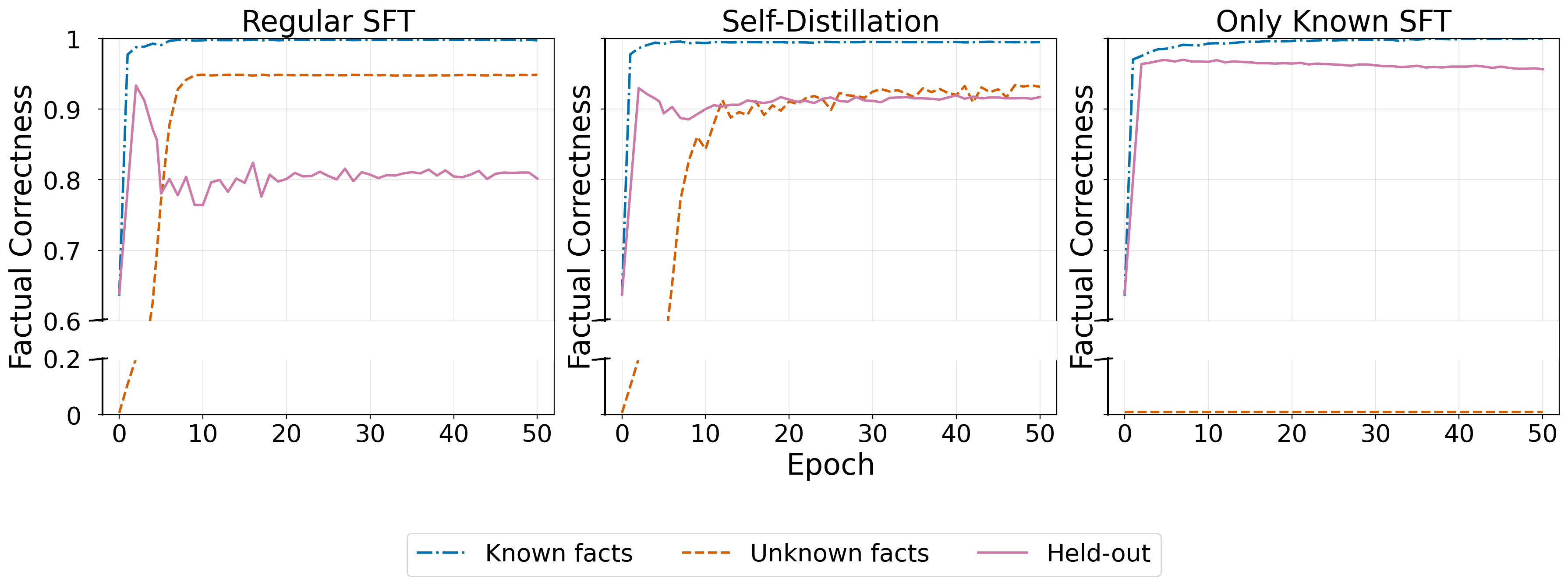}
  \caption{
Factual correctness over training epochs for \Known{Known} (blue), \Unk{Unknown} (orange), and \Held{Held-out} (pink) facts, under regular SFT (\textbf{left}), self-distillation (\textbf{middle}), and Only Known training (\textbf{right}). With self-distillation the model acquires new facts at a rate comparable to SFT while $\DHeld$ matches the Only Known condition, sharply reducing forgetting.
  }
  \label{fig:self_distillation}
\end{figure}

As shown in \cref{fig:self_distillation}, self-distillation enables factual learning at a pace comparable to standard fine-tuning, while substantially reducing forgetting. Performance on $\DHeld$ degrades in relation to its peak by only approximately 2 pp, compared to a 15 pp decline under regular SFT. Importantly, performance on $\DUnk$ follows a similar learning trajectory with a final acquisition gap below $1.5$ pp, indicating high factual plasticity. This makes self-distillation practical whenever fine-tuning needs both factual learning and stability.

\paragraph{Comparison to parameter-space alternatives}
Could weight-space regularization do the same? We compare self-distillation against parameter-space methods in the same setting: $\ell_2$ toward a model snapshot, EWC~\citep{Kirkpatrick_2017} with the Fisher estimated on a sample of $\DKnown$ (standard EWC needs old-task data or trajectories, inaccessible for pretraining), LoRA~\citep{hu2021loralowrankadaptationlarge, biderman2024loralearnsforgets} across ranks, and AdamW weight decay across strengths (\cref{tab:baselines}, \appref{app:baselines}). No parameter-space method approaches self-distillation: all leave final $\DHeld$ in the $0.78$--$0.84$ range, close to standard SFT, whereas self-distillation preserves $0.91$ at comparable factual acquisition. \secref{am} explains the asymmetry once the mechanism is identified.

\subsection{Mitigations Transfer to Real Instruction-Tuning Data}
\label{sec:alpaca}
Our controlled setup isolates factual learning, but practical fine-tuning data
rarely consists of clean relational facts. To test whether our findings extend
to realistic alignment training, we fine-tune \textsc{Qwen}~2.5-1.5B on 15K
examples from the Alpaca instruction-following dataset~\citep{taori2023alpaca}
under standard SFT, self-distillation (hyperparameters unchanged, no additional
tuning), and parameter freezing. Since factual content in Alpaca is
implicit, an LLM annotator flags training samples that carry factual knowledge;
for these we generate \textsc{EntityQuestions}-style probes and apply SLiCK,
holding out a set of factual probes for evaluation. Alignment quality is measured with AlpacaEval~\citep{alpaca_eval}.\footnote{We use
\textsc{Llama}-3.3-70B for annotation and evaluation; see \appref{app:alpaca}
for more details.}

The pattern from the controlled setting carries over even though factual content is only implicit (\cref{tab:alpaca}). Over training, standard SFT degrades held-out factual accuracy by $7.4$ pp from its peak, while self-distillation and attention-only training limit the drop to $1.7$ and $0.9$ pp respectively. Importantly, alignment quality is preserved: self-distillation even slightly exceeds standard SFT on AlpacaEval, and freezing trades a modest alignment cost for the strongest factual stability while still improving over the base model. Overall, the practical picture matches the controlled one: both mitigations curb factual forgetting while preserving alignment quality.
\begin{table}[t]
\centering
\begin{tabular}{lcccc}
\toprule
 & & \multicolumn{3}{c}{Held-out factual accuracy} \\
\cmidrule(lr){3-5}
Configuration & AlpacaEval & Final & $\Delta_{\text{peak}}$ (pp) & $\Delta_{\text{base}}$ (pp) \\
\midrule
Base model        & 38.8\%          & 0.820          & ---    & ---    \\
Standard SFT      & 50.5\%          & 0.795          & $-7.4$ & $-2.6$ \\
Self-distillation & \textbf{52.3\%} & 0.852          & $-1.7$ & $+3.1$ \\
Attention-only    & 47.2\%          & \textbf{0.860} & $-0.9$ & $+4.3$ \\
\bottomrule
\end{tabular}
\caption{\textbf{Realistic instruction tuning on Alpaca.} We report AlpacaEval
preferred rate at epoch~4, the early-stopping point selected by validation
preferred rate, and held-out factual accuracy at the end of training, measured
both from the peak across runs (0.869) and from each run's own starting point.}
\label{tab:alpaca}
\end{table}

\begin{tcolorbox}[insightbox,
title=\textbf{Insight 2: Self-Distillation Enables Factual Learning Without Forgetting (\secref{continual_learning})}]
\label{insight2}
Constraining output-distribution drift reduces hallucinations without sacrificing task learning or factual acquisition.
\end{tcolorbox}

\section{Semantic Overlap Drives Interference in Factual Updates}
\label{sec:semantic_interference}

What causes the forgetting? We consider three hypotheses.
A \emph{behavioral} account: because SFT rewards producing an answer on every training example, it clones answering behavior irrespective of the model's knowledge boundaries~\citep{Schulman2023, zhang2024alleviatinghallucinationslargelanguage}. Over training, this shifts response tendencies toward answers ungrounded in prior knowledge~\citep{zucchet2025languagemodelslearnfacts}. This predicts similar degradation whenever unknown facts are introduced, regardless of surface form.
A \emph{capacity-based} account: new facts compete for limited representational resources, displacing older ones near the storage limit~\citep{allenzhu2024physicslanguagemodels33}. This predicts broadly distributed degradation that scales with the number of stored facts.
A \emph{structural} account: forgetting arises from interference among overlapping internal representations~\citep{masip2026puttingfaceforgettingcontinual, nishi2025representationshatteringtransformerssynthetic}. This predicts selective interference concentrated among semantically similar entities.
Though not mutually exclusive, these accounts make distinct predictions, which we test in controlled synthetic settings that independently vary representational overlap and scale.

\subsection{Semantic Overlap and Scale as Sources of Interference}
\label{sec:semantic_overlap}

We study interference in a synthetic setting over a single relation (P17: \textit{Location} $\rightarrow$ \textit{Country}), varying representational overlap and the number of new facts through the construction of the \emph{key} (location): (i) \textbf{semantic keys}, formed by recombining tokens from real location names and therefore resembling existing entities 
(e.g., 
\textbf{Berg}amo
+
Pas\textbf{adena}
$\rightarrow$
\textbf{Bergadena})
 and (ii) \textbf{UUID keys}, constructed as random identifiers with no syntactic similarity to known entities (e.g., \texttt{Loc\_fcfb46ee}).\footnote{See \appref{app:synthetic_creation} for the full data creation process.} Values are instantiated as real country names. We scale the synthetic facts from $10^{3}$ to $10^{6}$, keeping supervision, relation, and optimization identical.

\begin{wrapfigure}{r}{0.44\textwidth}
    \vspace{-2em}
    \centering
    \includegraphics[width=\linewidth]{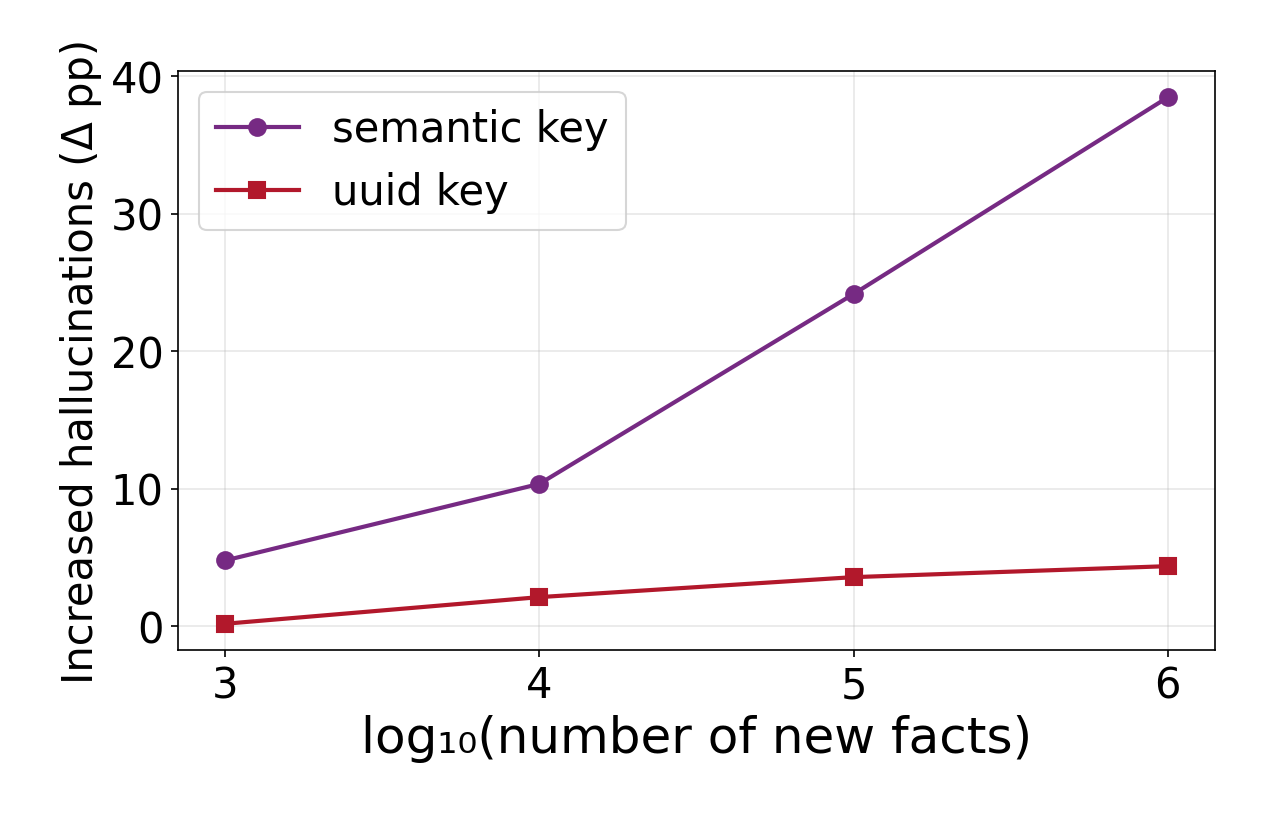}
    \caption{
    \textbf{Semantic similarity, not scale alone, drives forgetting.}
    $\Delta \DHeld$ grows sharply with the number of new facts for \textcolor{violet}{name-like keys}, but remains negligible (0--4 pp) for \textcolor{red}{UUID keys} across all scales.
    }
    \label{fig:scale-curves}
    \vspace{-1em}
\end{wrapfigure}

\paragraph{Results} Across all scales (\cref{fig:scale-curves}), the model fully acquires the new synthetic facts ($\DUnk = 100\%$) and maintains high performance on $\DKnown$ ($\ge 90\%$), ruling out differences in learning success or global task degradation. We therefore focus on $\Delta \DHeld$, the percentage-point drop from peak held-out accuracy during training.
Two patterns emerge. First, even at small scales, forgetting depends strongly on the construction of the \emph{key}: name-like keys induce drops of several percentage points, whereas UUID keys yield negligible changes despite identical training signals and complete acquisition. Because exposure to unknown facts is held constant across conditions, this asymmetry is difficult to reconcile with a purely behavioral account.
Second, forgetting amplifies with scale only for semantically overlapping keys. $\Delta \DHeld$ grows sharply for name-like keys, reaching large drops at $10^{6}$ facts (39 pp), while UUID keys remain at 0--4 pp regardless of dataset size.

\begin{wrapfigure}{r}{0.44\textwidth}
\vspace{-1.2em}
\centering
\includegraphics[width=\linewidth]{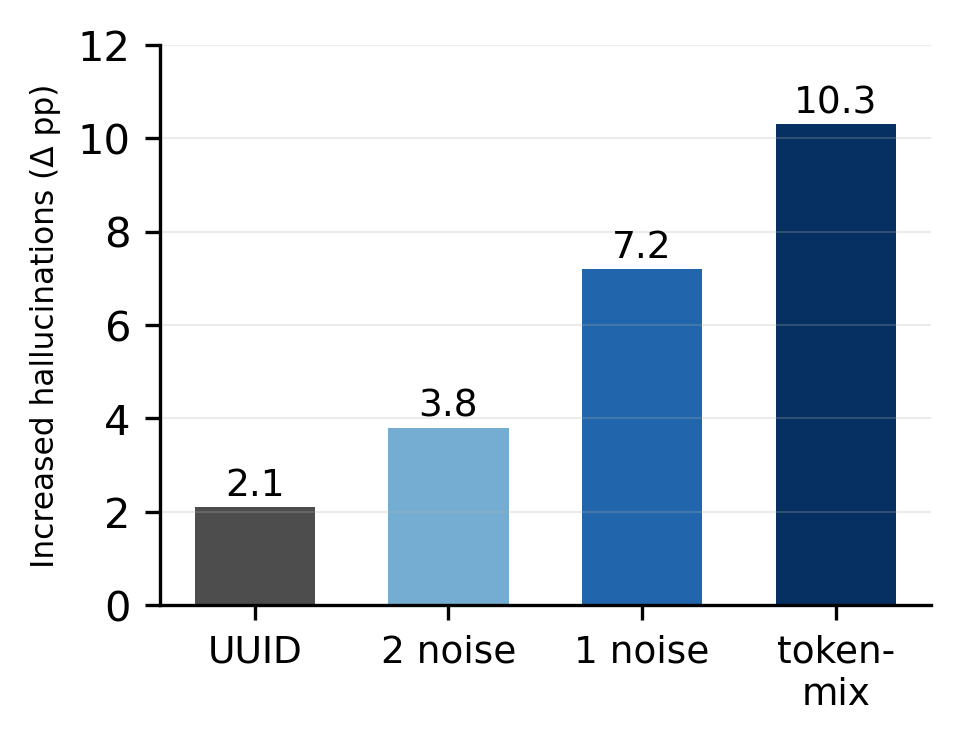}
\caption{Forgetting at $10^{4}$ facts grows monotonically with base-model key overlap (\cref{fig:dose-layers}, \appref{app:geometry}).}
\label{fig:dose-response}
\vspace{-2em}
\end{wrapfigure}
This asymmetric scaling pattern is not what a simple global capacity constraint would predict, since a capacity-limited model should exhibit more uniform degradation as additional facts are stored, regardless of surface form. Instead, these results may support a structural interference account: SFT-induced factual hallucinations arise when updates for semantically overlapping entities interfere with nearby representations, whereas updates to syntactically isolated entities largely avoid such interference.

\paragraph{A dose-response on representational overlap}
The binary semantic/UUID contrast leaves open whether forgetting tracks the \emph{degree} of representational overlap or merely the presence of a name-like surface form. We therefore construct two intermediate conditions by replacing one or two tokens of each token-mix name with identifier tokens of the kind used in UUID keys (\emph{token-mix 1-noise} and \emph{2-noise}; \appref{app:synthetic_creation}), turning the binary contrast into a graded scale of four overlap levels. On the \emph{base} model, before any fine-tuning, the four conditions occupy graded positions in representation space (\cref{fig:dose-layers}, \appref{app:geometry}): the mean cosine similarity to real-city representations at the last layer increases monotonically from UUID (0.45) through 2-noise (0.58) and 1-noise (0.64) to token-mix (0.71). The ordering is preserved at every layer past the earliest ones and replicated by a linear ``is a city'' probe. The overlap is thus inherent to the pretrained representation, not created by fine-tuning. 

Fine-tuning on $10^{4}$ facts per condition with identical supervision then produces forgetting that tracks this geometry: 2.1 (UUID), 3.8 (2-noise), 7.2 (1-noise), and 10.3 (token-mix) pp. Upstream geometric ordering and downstream forgetting coincide: forgetting scales continuously with base-model overlap rather than switching on the presence of a name-like form, a dose-response predicted by the overlap account.

\begin{tcolorbox}[insightbox,
title=\textbf{Insight 3: Representational Overlap Drives Forgetting (\secref{semantic_overlap}})]
\label{insight3}
Forgetting scales with overlap between new and stored entities, not dataset size alone.
\end{tcolorbox}

\subsection{What Self-Distillation Reveals About the Origins of Factual Forgetting}
\label{sec:why_self-distillation}

We showed that forgetting is selective: it arises when new entities share token structure with existing ones and is near-zero otherwise. Self-distillation substantially reduces this forgetting~(\secref{self-distillation}). Two explanations are possible: a generic regularization effect that limits total weight movement, or specific suppression of interference through the output distribution over which it propagates.

To distinguish these, we return to the 10K semantic-overlap synthetic setting and compare standard SFT against self-distillation
and $\ell_2$ regularization toward $\theta_i$, the model snapshot after epoch~2
($\mathcal{L}(\theta) = \mathcal{L}_{\text{task}}(\theta) + \lambda \|\theta - \theta_i\|_2^2$, with $\lambda$ matched to self-distillation in gradient magnitude).
Under $\ell_2$ regularization, forgetting on $\DHeld$ remains near $10$ pp, comparable to standard SFT; increasing $\lambda$ reduces forgetting only
at the cost of impairing acquisition on $\DUnk$. Generic weight regularization
thus does not replicate the self-distillation benefit.

We next ask if interference between overlapping entities leaves a trace
in the model's internal representations. We track hidden-state
drift for held-out entities throughout training,
$\mathrm{RD}_t = \mathbb{E}_{x \in \DHeld}\left[1 - \cos\left(H_0^{(14)}(x), H_t^{(14)}(x)\right)\right]$,
where $H_t^{(14)}(x)$ denotes the hidden representation of the key
entity's final token at layer~14
after $t$ training steps.\footnote{14 is a middle layer, where entity
representations tend to be encoded \citep{kaplan2025tokenswordsinnerlexicon}.}

\begin{wrapfigure}{r}{0.44\textwidth}
    \centering
    \vspace{-0.5em}
    \includegraphics[width=0.42\textwidth]{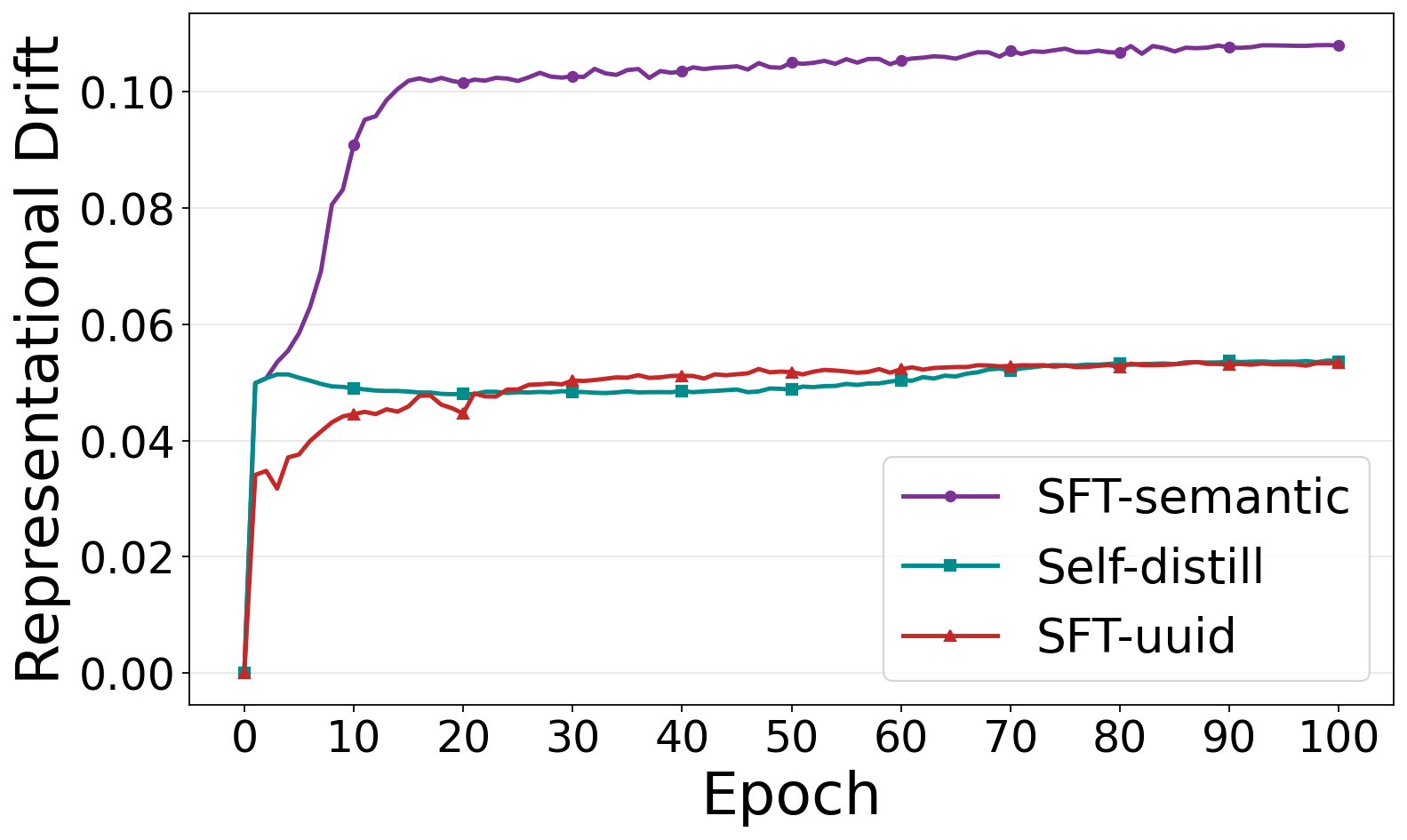}
    \caption{
\textbf{Hidden-state drift} ($\mathrm{RD}_t$; defined in text).
\textcolor{red}{UUID keys} and \textcolor{teal}{self-distillation} 
both stabilize near ${\approx}5\%$ after task-format learning; 
\textcolor{violet}{SFT on semantic keys} continues to ${\approx}11\%$, 
implicating representational interference as the source of factual forgetting.
    }
    \label{fig:hidden_drift}
    \vspace{-1em}
\end{wrapfigure}
\Cref{fig:hidden_drift} shows the drift trajectories for three conditions: SFT on semantic keys, self-distillation on semantic keys, and SFT on UUID keys. All three rise to ${\approx}5\%$ within the first epoch, a task-format learning signal present regardless of entity type. The curves then diverge: under SFT on semantic keys, drift continues to ${\approx}11\%$ as overlapping facts are acquired; under SFT on UUID keys, drift stabilizes at ${\approx}5\%$; and self-distillation on semantic keys also stabilizes near $5\%$.

The alignment between UUID and self-distillation is the key observation: UUID keys occupy a disjoint representational neighborhood, so learning them leaves nearby held-out representations undisturbed, while self-distillation reaches the same residual drift despite training on overlapping entity names. The excess drift from $5\%$ to $11\%$ under SFT on semantic keys is thus the component attributable to representational overlap: it is absent under both conditions that avoid or suppress interference, and is not predicted by capacity or behavioral alternatives.\footnote{Additional experiments on the distillation objective and drift metrics are reported in \appref{app:drift}.} 

Self-distillation thus localizes the origin of forgetting by removing it: suppressing output-distribution drift appears to eliminate the overlap-induced component.


\subsection{Why Overlap Predicts Forgetting: An Associative-Memory Account}
\label{sec:am}

Our experiments show that overlap governs forgetting; they do not say \textit{why} it should. We close with an account resting on a single premise: new facts enter the same store as the old ones. Following \citet{nichani2025understandingfactualrecall}, we treat factual retrieval as an effective associative memory that pairs each subject's internal representation, its \emph{key}, with the output direction of its answer. Writing $k_x\in\mathbb{R}^d$ for the key of subject $x$ and $u_y\in\mathbb{R}^d$ for the unembedding of answer $y$, the store and its readout take the form
\begin{equation}
W \;=\; \sum_{x\in\mathcal{X}} u_{f^\star(x)}\, k_x^{\top},
\qquad
s_y(x) \;:=\; u_y^{\top} W\, k_x ,
\label{eq:W}
\end{equation}
where the sum runs over the stored subjects $\mathcal{X}$, $f^\star$ maps each to its stored answer (on $\mathcal{D}_{\mathrm{Held}}$, restricted to \emph{HighlyKnown} facts, the correct one), $W$ is an effective operator describing the network's input--output behavior, and $s_y(x)$ is the resulting logit for any candidate answer $y$. In \secref{semantic_overlap} the subjects are locations and the answers countries, and the model returns $\arg\max_y s_y(x)$: a stored fact is forgotten when a competitor overtakes its answer, as $s_{\textit{Greece}}(\textit{Milan})$ overtakes $s_{\textit{Italy}}(\textit{Milan})$. Keys are the representations whose drift we tracked in \secref{why_self-distillation}, and they are not orthogonal, so writes leak between stored facts~\citep{elhage2022toymodelssuperposition}.

SFT writes into this same memory. To first order in the learning rate $\eta$ at a fixed key map, a softmax cross-entropy step on a new fact with subject $x_j$ and answer $a_j$ adds a rank-one Hebbian increment~\citep{kohonen1972correlation,hopfield1982neural} $\Delta W_j=\eta\,(u_{a_j}-\bar u_j)\,k_{x_j}^{\top}$, with $\bar u_j$ the mean unembedding under the model's prediction (Lemma~\ref{lem:hebbian}, \appref{app:theory}). Its effect on the fact stored under a held-out subject $x_h\in\mathcal{D}_{\mathrm{Held}}$ factorizes:
\begin{equation}
\delta^{(j)}_y(x_h) \;:=\; u_y^{\top}\Delta W_j\,k_{x_h}
\;=\;
\underbrace{\eta\,\langle u_y,\;u_{a_j}-\bar u_j\rangle}_{\text{answer side}}
\;\cdot\;
\underbrace{\rho_{jh}}_{\text{key overlap}},
\qquad
\rho_{jh}:=\langle k_{x_j},k_{x_h}\rangle
\label{eq:factorize}
\end{equation}
(the cosine, for unit-norm keys). The stored fact enters only through $\rho_{jh}$, the overlap between the two subjects' keys: the answer sets how large the perturbation is; the overlap sets which stored facts receive it. Learning \emph{Bergadena}$\rightarrow$\emph{Greece} thus pushes the readout for \emph{Milan}$\rightarrow$\emph{Italy} toward \emph{Greece} in proportion to $\langle k_{\textit{Bergadena}},k_{\textit{Milan}}\rangle$, which is large; \texttt{Loc\_fcfb46ee}$\rightarrow$\emph{Greece} pushes just as hard against a negligible overlap~(\cref{fig:dynamics}, right). The value should not matter, and \cref{tab:semantic_overlap} confirms it: semantic keys induce 38--41\,pp of forgetting and UUID keys $\approx$4 pp, whatever the value.

Summing over the $M$ injections splits the perturbation into a bias and a mean-zero fluctuation (Proposition~\ref{prop:bias-var}). The bias stays bounded in $M$ and fades as the new fact is acquired. The scale dependence lives in the fluctuation, whose variance is bounded by a constant multiple of $\eta^2 S_h/d$, where $S_h:=\sum_{j=1}^{M}\rho_{jh}^2=M\,\overline{\rho_h^2}$ is the number of injections times their mean squared overlap with $k_{x_h}$. Under a Gaussian approximation to this fluctuation, the forgetting probability increases with $S_h$ (\appref{app:forget-law}).

Because $S_h$ is the product of scale and overlap, it makes a prediction along each axis of \secref{semantic_overlap}. On the scale axis, UUID keys have the least overlap with real entities (\appref{app:geometry}), so $S_h$ grows slowly with $M$ and forgetting stays at 0--4 pp even at $M=10^6$, while for semantic keys both grow sharply. On the overlap axis, at fixed $M=10^4$, the base-model overlap ordering of the four key conditions yields exactly the observed forgetting ordering; only the ranking enters, so the prediction is insensitive to the layer at which keys are read.

The associative-memory account thus unifies four findings that entered the paper separately: forgetting grows with scale only for overlapping keys (\cref{fig:scale-curves}); it increases monotonically with base-model overlap (\cref{fig:dose-response}); it is indifferent to value type (\cref{tab:semantic_overlap}); and it saturates once new facts are acquired (\cref{fig:baseline}). The account is qualitative. It assumes a first-order update, a fixed key map, an idealized answer geometry, and a Gaussian approximation to the fluctuation, so it predicts orderings, not a functional form. \appref{app:theory} states each assumption and extends the result to distributed storage, beyond the one-layer setting of \citet{nichani2025understandingfactualrecall}. It also explains why output-space regularization succeeds where weight-space fails (\secref{why_self-distillation}). The interference lives along the unembedding directions: a KL penalty constrains these directly, a weight-space penalty cannot see them (\appref{app:output-vs-weight}). Forgetting is thus visible before training, in the overlap between new and stored keys.

\section{Related Work}
\label{sec:background_related}

\paragraph{Hallucinations and scope}
Hallucinations are commonly defined as model outputs unfaithful to real-world facts~\citep{Ji_2023, simhi2025hackhallucinationscertaintyknowledge}, provided context~\citep{liu2025longcontexthallucinationdetection}, or user instructions~\citep{belem2025singlemultillmshallucinate}, arising from mechanisms spanning data, training, and inference~\citep{kalai2025languagemodelshallucinate, azaria2023internalstatellmknows}. 
We view SFT-induced hallucinations as the behavioral manifestation of
\emph{factual forgetting} driven by representational interference: in the closed-book QA format~\citep{gekhman2024doesfinetuningllmsnew, ovadia2024finetuningretrievalcomparingknowledge, zucchet2025languagemodelslearnfacts}, a previously known fact the model can no longer produce \emph{is} a hallucination. Our results suggest the phenomenon arises whenever fine-tuning disturbs prior facts.

\paragraph{Factual knowledge in LLMs and module roles}
LLMs encode factual associations parametrically, with both FFN layers \citep{geva2021transformerfeedforwardlayerskeyvalue, meng2023locatingeditingfactualassociations, kaplan2025tokenswordsinnerlexicon} and attention projections \citep{dar2023analyzingtransformersembeddingspace, elhelo2025inferringfunctionalityattentionheads, chughtai2024summingfactsadditive} storing and expressing it. Representations are thus distributed across components, and skill performance survives updating only parameter subsets~\citep{zhu2025teachlargemultimodalmodels}; \secref{Modules_ablations} shows how that choice matters.

\paragraph{Hidden knowledge and recall failures}
LLMs often encode more knowledge than they express, hallucinating while internally knowing the answer~\citep{gekhman2025insideouthiddenfactualknowledge, orgad2025llmsknowshowintrinsic, simhi2025hackhallucinationscertaintyknowledge}, and latent knowledge can sometimes be elicited by reasoning~\citep{gekhman2026thinkingrecallreasoningunlocks, calderon2026shelveslostkeysrecall}. Some SFT-induced hallucinations may thus reflect degraded recall rather than loss: the interference we identify may corrupt recall pathways without erasing the knowledge, and self-distillation may preserve both.

\paragraph{Continual learning in LLMs}
Continual learning studies the stability-plasticity tension in sequential training, proposing regularization, replay, architectural isolation, and distillation~\citep{li2017learningforgetting, Kirkpatrick_2017, delange2021continuallearningsurveydefying}; recent work applies distillation and KL-based constraints to continual \emph{task} learning in LLMs~\citep{zhu2025teachlargemultimodalmodels, shenfeld2025rlsrazoronlinereinforcement}.
Closest to our method, SDFT~\citep{shenfeld2026selfdistillationenablescontinuallearning} distills on-policy with reverse KL from a teacher conditioned in-context on an expert demonstration, teaching a new behavior; our teacher is an unconditioned frozen snapshot, distilled off-policy with forward KL as a temporal anchor for prior knowledge. The two share a core mechanism while targeting complementary problems.
We treat preservation of \emph{pretrained factual knowledge} as the stability objective, in contrast to continual knowledge learning, whose goal is the factual update itself~\citep{jang2022continualknowledgelearninglanguage, chen2025continualmemorizationfactoidslanguage}.

\section{Conclusion}
We reframed SFT-induced hallucinations as factual forgetting arising from continual learning dynamics, and showed that established mitigation strategies transfer to this setting. When new fact acquisition is undesirable, selective freezing suppresses factual plasticity while preserving task learning; when it is required, self-distillation constrains output-distribution drift and reduces forgetting from $\sim$15 to 2 pp without sacrificing plasticity. Both mitigations carry over to realistic instruction tuning. We also found that forgetting is selective: it scales with the representational overlap between new and existing entities, while random UUID facts induce near-zero forgetting even at $10^6$ new facts, implicating localized interference rather than capacity or behavioral effects. Representational drift analysis suggests self-distillation suppresses this interference at the output level, and an associative-memory account predicts the forgetting from the overlap between new and stored keys.

In summary, SFT-induced hallucinations are not an inevitable cost of factual learning but a consequence of representational interference that can be targeted with selective freezing or self-distillation. We hope this encourages future work that treats factual stability as a first-class fine-tuning objective.

\section*{Acknowledgments}
We thank the anonymous reviewers and the area chair for their thorough and constructive feedback, which substantially shaped this version. We thank Uri Alon for his help refining the theoretical derivations during the discussion period.
This work was supported in part by the Israel Science Foundation (grant no.\ 2045/21), by NSF-BSF grant 2020793 and by the ONR award N00014-23-1-2383.

\section*{Reproducibility Statement}
All experimental configurations are specified in the paper and appendices. SLiCK classification (\secref{Methodology}) uses 20 evaluation runs per question, each with three randomly sampled few-shot exemplars, retaining only relations for which at least 30\% of examples are classified as \emph{HighlyKnown}. Training uses learning rate $5\times10^{-5}$ (selected by jointly evaluating time-to-learn on unknown facts and the induced hallucination rate), batch size 16, and the hyperparameters listed per experiment; self-distillation defaults are $\lambda=1$, $\tau=0.5$, teacher snapshot at epoch 1 (\appref{app:different_sd_params}). Synthetic-entity construction is described in \appref{app:synthetic_creation}, the Alpaca protocol in \appref{app:alpaca}, representation-geometry analyses in \appref{app:geometry}, and baseline sweeps in \appref{app:baselines}. Code and data-construction scripts are available at \url{https://github.com/schwartz-lab-NLP/finetuning-hallucinations}.

\paragraph{LLM usage.}
Large language models were used as general-purpose assistants in preparing this work: polishing prose (grammar, phrasing, and tightening for length), suggesting \LaTeX{} and plotting boilerplate, and assisting with refactoring and documentation of the released code. All research questions, experimental design, analyses, and scientific claims are the authors' own, and the authors verified every number, citation, and statement reported here. Separately, and as described in the text, LLMs appear within the experiments themselves---as the models under study, and as automated annotators and judges (\textsc{Llama}-3.3-70B-Instruct for Alpaca factuality annotation and AlpacaEval judging; \appref{app:alpaca}).

\bibliography{colm2026_conference}
\bibliographystyle{colm2026_conference}

\appendix

\section{Results Over Other SLiCK Classification Groups}
\label{app:other_clf}

The main experiments focus on \emph{HighlyKnown} facts throughout, on both the training and validation sides. This choice follows a consistent principle. On the training side, the role of $\DKnown$ is to teach the model the QA task format without introducing any new factual content. Facts classified as \emph{HighlyKnown} by SLiCK, meaning those for which the model produces the correct answer as its top-1 prediction across all few-shot prompt configurations, are ideally suited for this purpose: training on them reinforces format while leaving factual representations undisturbed. Any other group would risk conflating format acquisition with factual learning, obscuring the mechanism under study. On the validation side, having restricted $\DKnown$ to \emph{HighlyKnown} facts, the natural corresponding held-out set is drawn from the same group, providing a clean forgetting signal: any accuracy decline on $\DHeld$ reflects genuine factual interference rather than pre-existing encoding fragility.

The four relations retained for our experiments (P17, P36, P407, and P495) were selected in part because each exhibits \emph{HighlyKnown} prevalence exceeding 30\% on the development split, ensuring a sufficient pool for both $\DKnown$ and $\DHeld$. \Cref{tab:app_slick_dist} reports the full SLiCK category distribution across all four relations on the development set ($n{=}3{,}530$). \emph{HighlyKnown} facts constitute the plurality (46.6\%), followed by \emph{MaybeKnown} (32.4\%); \emph{WeaklyKnown} (5.3\%) and \emph{Unknown} (15.8\%) make up the remainder. \emph{WeaklyKnown} is the smallest group: these are facts the model never answers correctly as its greedy top-1 prediction in any of SLiCK's 20 few-shot prompt variations, the lowest degree of factual knowledge.

\begin{table}[h]
\centering
\footnotesize
\setlength{\tabcolsep}{4pt}
\begin{tabular}{@{}lcccc@{}}
\toprule
\textbf{Relation} & \textbf{HighlyKnown} & \textbf{MaybeKnown} & \textbf{WeaklyKnown} & \textbf{Unknown} \\
\midrule
P17  (location $\rightarrow$ country)       & 41.3\% & 44.3\% & 6.5\%  & 7.9\%  \\
P36  (country $\rightarrow$ capital)        & 50.7\% & 9.3\%  & 4.4\%  & 35.6\% \\
P407 (artwork $\rightarrow$ language)         & 42.0\% & 45.4\% & 4.7\%  & 7.9\%  \\
P495 (artwork $\rightarrow$ origin country)   & 51.1\% & 32.5\% & 5.2\%  & 11.2\% \\
\midrule
\textbf{Combined} & \textbf{46.6\%} & \textbf{32.4\%} & \textbf{5.3\%} & \textbf{15.8\%} \\
\bottomrule
\end{tabular}
\caption{
\textbf{SLiCK category distribution on the development split for the four selected relations} ($n{=}3{,}530$). Each relation exceeds 30\% \emph{HighlyKnown}, the threshold used for inclusion.
}
\label{tab:app_slick_dist}
\end{table}

While the main text focuses exclusively on \emph{HighlyKnown}, the held-out set also contains \emph{MaybeKnown} and \emph{WeaklyKnown} facts, which exhibit qualitatively distinct dynamics under fine-tuning. \Cref{fig:val_categories} reports training curves for all three knowledge groups under Regular SFT, Self-Distillation, and Only Known.

\begin{figure}[h]
\centering
\includegraphics[width=\textwidth]{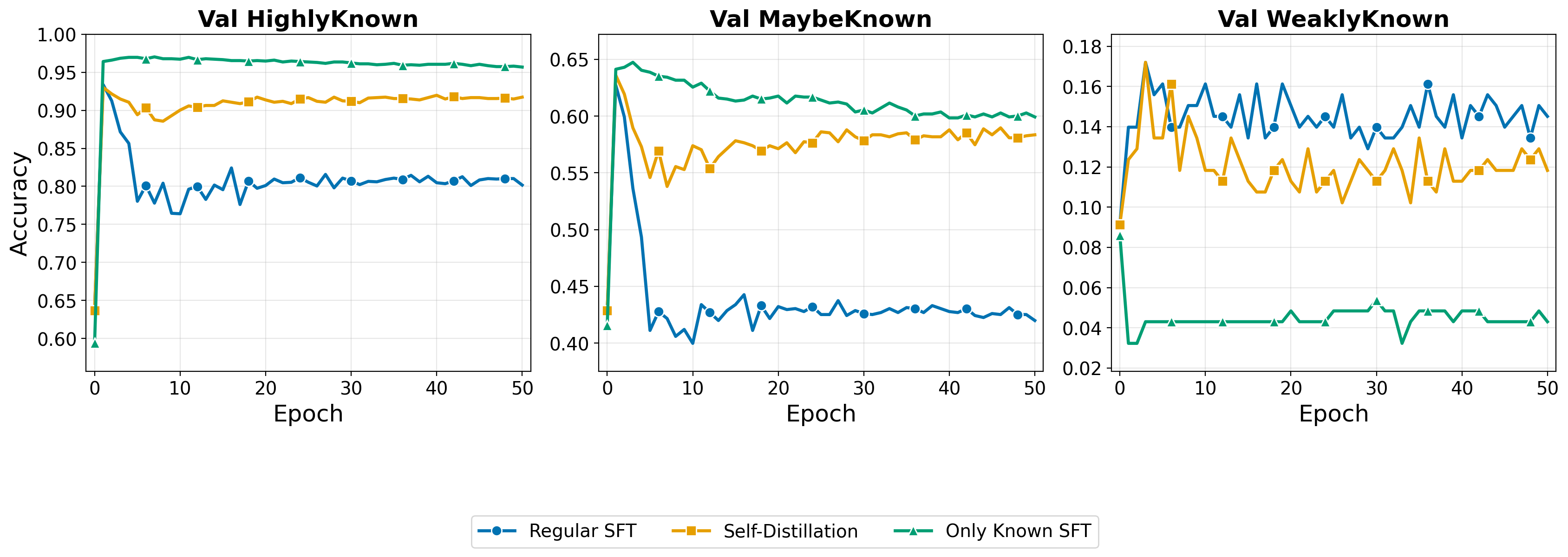}
\caption{
\textbf{Training dynamics across SLiCK knowledge groups.} Factual correctness over training epochs on held-out \emph{HighlyKnown} (left), \emph{MaybeKnown} (middle), and \emph{WeaklyKnown} (right) facts, for Regular SFT (blue), Self-Distillation (orange), and Only Known (green).
}
\label{fig:val_categories}
\end{figure}

The \emph{HighlyKnown} panel (left) reproduces the $\DHeld$ results from the main text. Only Known preserves accuracy throughout (${\approx}95\%$), self-distillation stabilizes at ${\approx}92\%$, and regular SFT degrades to ${\approx}80\%$, a drop of roughly 13 pp from the pre-fine-tuning baseline.

\emph{MaybeKnown} facts (middle panel) reveal substantially more severe forgetting under regular SFT. These facts are encoded less robustly: the model produces the correct answer under some but not all SLiCK prompt configurations, and this pre-existing fragility amplifies their susceptibility to representational interference. Accuracy collapses from approximately 0.64 to 0.43, a drop of roughly 21 pp, compared to 13 pp on \emph{HighlyKnown}. Self-distillation provides a commensurately larger benefit, stabilizing at approximately 0.58 and recovering approximately 15 pp over regular SFT. Only Known again provides the strongest preservation (${\approx}0.60$), confirming that $\DUnk$ is the proximate driver of interference regardless of the robustness of the prior encoding. This pattern is consistent with the findings reported for Llama-3.1-8B and Qwen2.5-7B in \appref{app:different_models}.

\emph{WeaklyKnown} facts (right panel) exhibit a qualitatively different pattern that inverts the ordering observed in the other two groups. These facts are those for which the model never produces the correct answer as its top-1 prediction under any SLiCK configuration (knowledge that is present but highly latent), and baseline accuracy is accordingly low (${\approx}0.15$--$0.17$). Strikingly, Only Known drops to approximately 0.04--0.05 and remains there throughout training, well below both Regular SFT (${\approx}0.14$) and Self-Distillation (${\approx}0.13$). Training exclusively on \emph{HighlyKnown} facts appears to further suppress recall of \emph{WeaklyKnown} ones: by reinforcing a retrieval regime calibrated for robustly encoded facts, it may render the more effortful access required for latent knowledge less accessible. Regular SFT on the full training mixture (including $\DUnk$) provides a more varied optimization signal that may incidentally preserve or lightly activate recall of weakly encoded facts. Self-distillation and regular SFT converge to similar performance in this regime, which is expected: the distributional constraint provides little additional benefit when the facts in question were not reliably accessible before fine-tuning began.

\section{Results Across Different Models}
\label{app:different_models}

We validate our main findings on two additional 7--8B-parameter models from distinct families: Llama-3.1-8B and Qwen2.5-7B. \Cref{fig:app_models_grid} reports training dynamics across all three conditions (Regular SFT, Self-Distillation, Only Known) for both architectures.

\begin{figure}[h]
\centering
\includegraphics[width=0.7\textwidth]{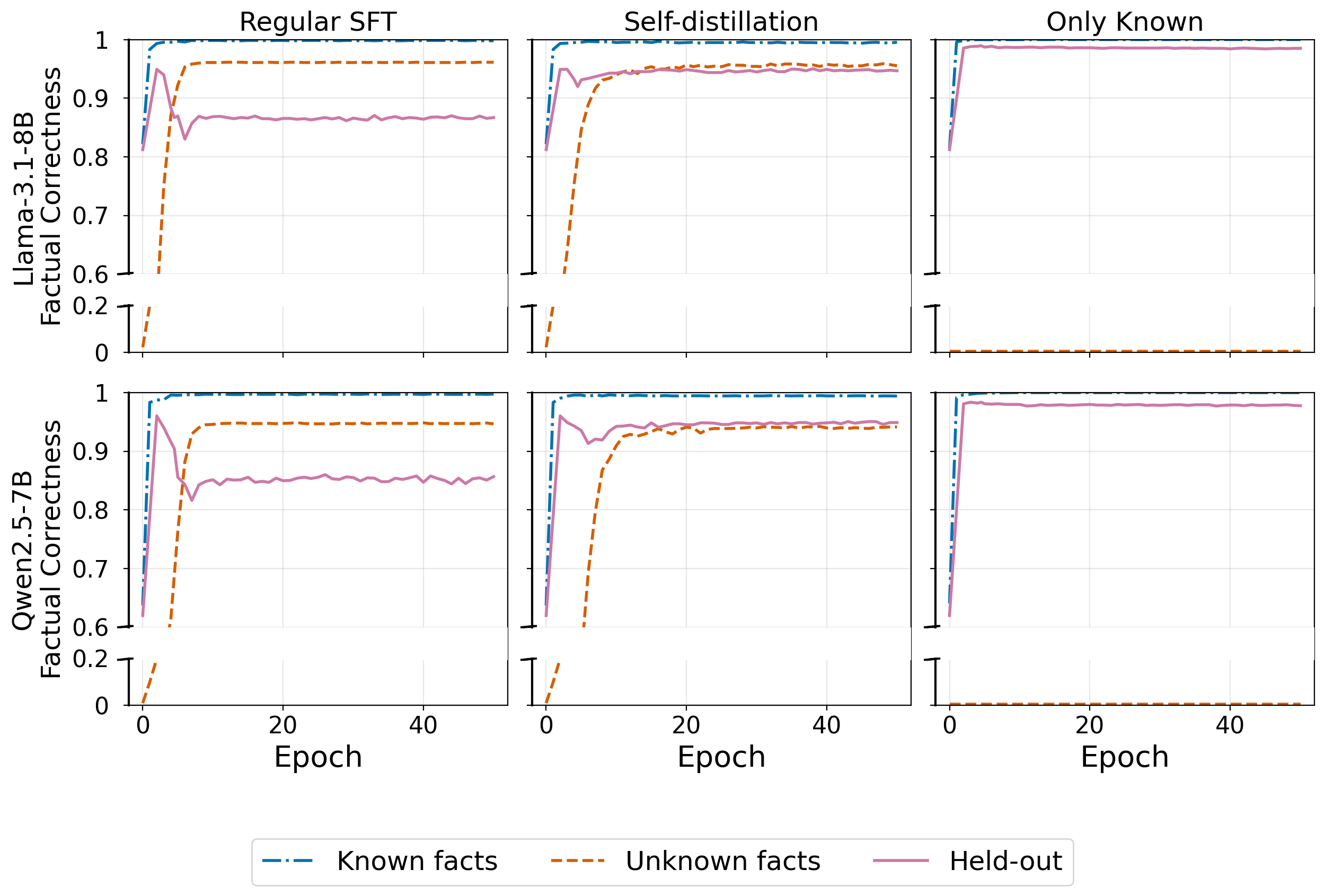}
\caption{
\textbf{Training dynamics on Llama-3.1-8B (top) and Qwen2.5-7B (bottom).}
Each panel shows factual correctness over training epochs for \KnownFacts{} (blue dash-dot), \UnkFacts{} (orange dashed), and \HeldFacts{} (pink solid), across Regular SFT (left), Self-Distillation (middle), and Only Known (right).
}
\label{fig:app_models_grid}
\end{figure}

The core training dynamics replicate faithfully across both model families, consistent in magnitude with our primary 1.5B-parameter experiments. Under regular SFT, both Llama-3.1-8B and Qwen2.5-7B exhibit the characteristic two-stage pattern: rapid task acquisition in the first few epochs---with near-perfect accuracy on $\DKnown$ (97--98\%)---followed by progressive forgetting of $\DHeld$ to approximately 85--86\%. The Only Known condition preserves $\DHeld$ throughout training at 95--97\%, and self-distillation closely tracks this upper bound, stabilizing at approximately 95\%---a drop of only 2--3 pp from the Only Known baseline, compared to 11--12 pp under regular SFT. This corresponds to approximately 80\% mitigation of forgetting, consistent with the degree of protection observed in the 1.5B model, confirming that self-distillation generalizes reliably across architectures and scales. The residual degradation under self-distillation ($\sim$2--3pp) is likely attributable to task-format adaptation rather than factual interference, consistent with the $\sim$5\% baseline drift observed even in the UUID condition (\secref{semantic_overlap}).

A complementary observation emerges from the fine-grained freezing results in \appref{app:different_modules}: training the full attention block is insufficient to achieve forgetting mitigation. The same level of stability is recovered only when restricting updates to a \emph{single} attention projection (e.g., $k$, $v$, or $o$), with all other parameters frozen. This finding goes beyond the coarse attention-vs-FFN distinction established in \secref{Modules_ablations}: within the attention block itself, granularity matters. Updating all attention parameters jointly preserves enough factual plasticity to drive forgetting; restricting to a single projection suppresses it enough to protect $\DHeld$.

\begin{figure}[h]
\centering
\includegraphics[width=0.7\textwidth]{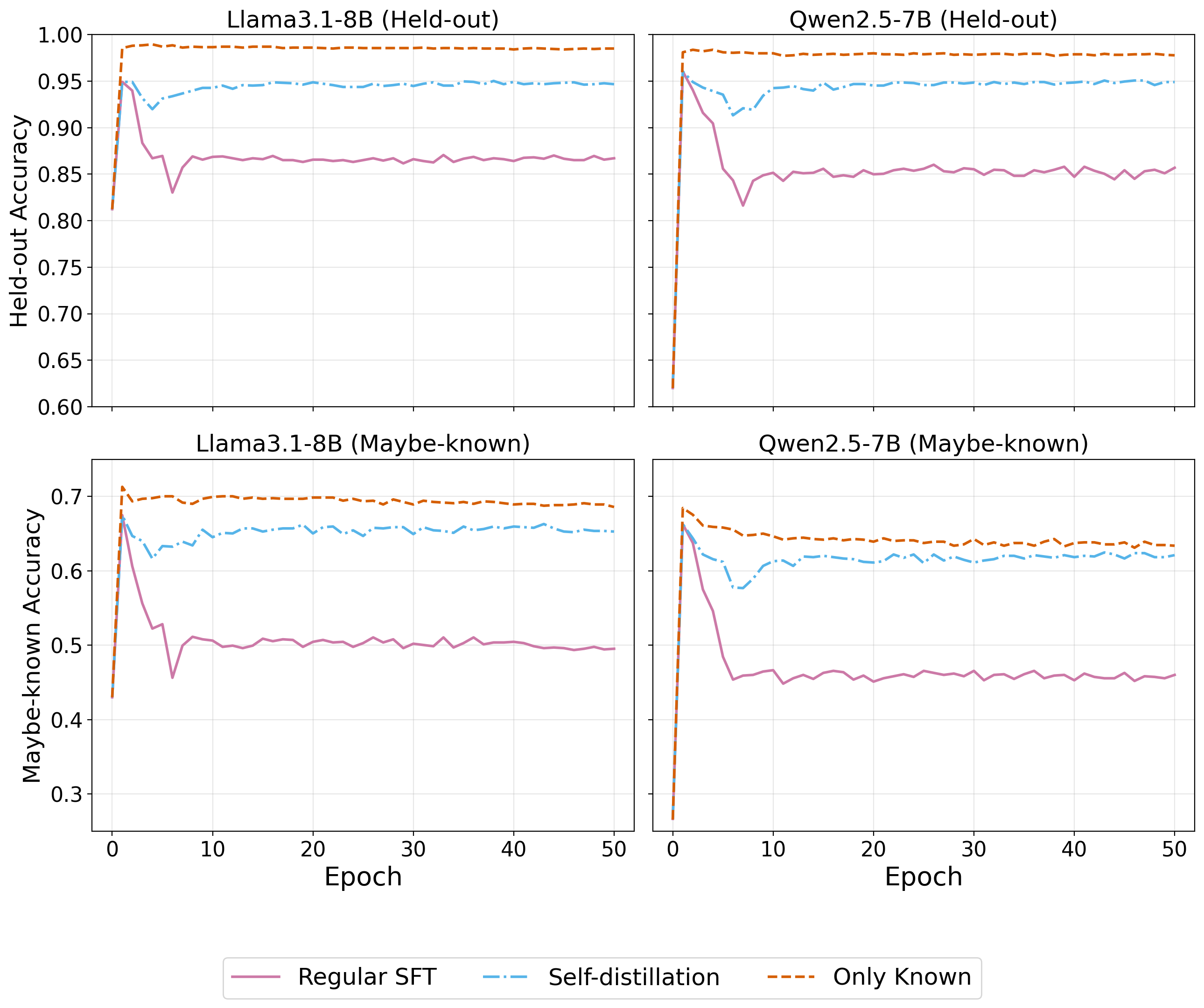}
\caption{
\textbf{Accuracy on $\DHeld$ and \emph{MaybeKnown} facts across training conditions.}
Top row: factual correctness on held-out \emph{HighlyKnown} facts ($\DHeld$). Bottom row: accuracy on out-of-training \emph{MaybeKnown} facts. Results are shown for Llama-3.1-8B (left column) and Qwen2.5-7B (right column) under Regular SFT (solid pink), Self-Distillation (blue dash-dot), and Only Known (orange dashed).
}
\label{fig:app_maybe_known}
\end{figure}

Beyond the held-out known facts, we examine the effect on \emph{MaybeKnown} facts, which SLiCK classifies as only partially known: the model produces the correct answer inconsistently across prompting configurations. This pattern is consistent across both model families and sizes (\cref{fig:app_maybe_known}), and the effect is substantially more pronounced than what is observed for $\DHeld$. Under regular SFT, accuracy on \emph{MaybeKnown} facts drops by approximately 18--19 pp (from $\sim$0.63 to $\sim$0.45 for Llama-3.1-8B, and from $\sim$0.68 to $\sim$0.49 for Qwen2.5-7B), compared to an 11--12 pp drop on $\DHeld$. We attribute this amplified vulnerability to the nature of partial knowledge: \emph{MaybeKnown} facts are encoded less robustly in the model's representations, and weaker encoding makes them more susceptible to interference when new factual knowledge is integrated. Their recall is already fragile before fine-tuning, so even moderate representational drift is sufficient to push them below the threshold of reliable recall. Self-distillation provides a commensurately larger benefit in this regime, stabilizing \emph{MaybeKnown} accuracy at approximately 62--65\% and mitigating approximately 90\% of the forgetting induced by regular SFT---compared to approximately 80\% on $\DHeld$---underscoring its particular importance for preserving knowledge that sits at the margins of model recall.

\section{Results Over Different Module Freezing Experiments}
\label{app:different_modules}

The main text (\secref{Modules_ablations}) establishes that freezing the FFN and updating only attention layers reduces forgetting by suppressing new fact acquisition. Here we examine the mechanism more finely: \emph{parameter freezing is not special}; it is simply one way to reduce factual plasticity. What drives the reduction in forgetting is not the act of freezing per se, but the resulting decrease in $\DUnk$. We verify this by sweeping over fine-grained parameter subsets and showing that the relationship between plasticity and forgetting is monotone and consistent regardless of which module is restricted.

\begin{table}[h]
\centering
\small
\begin{tabular}{lccc}
\toprule
\textbf{$\theta_S$} & $\DUnk\downarrow$ & $\DKnown\uparrow$ & $\DHeld\uparrow$ \\
\midrule
$k$        & 0.006 & 0.944 & 0.927 \\
$q$        & 0.006 & 0.912 & 0.901 \\
$v$        & 0.005 & 0.920 & 0.901 \\
$o$        & 0.006 & 0.941 & 0.925 \\
attn       & 0.010 & 0.946 & 0.931 \\
\midrule
gate+up    & 0.242 & 0.977 & 0.883 \\
down       & 0.086 & 0.956 & 0.918 \\
FFN        & 0.941 & 0.997 & 0.782 \\
\midrule
All (standard SFT)  & 0.946 & 0.990 & 0.780 \\
All (Only Known)    & ---   & 0.999 & 0.958 \\
\bottomrule
\end{tabular}
\caption{
\textbf{Fine-grained freezing ablations.} Each row corresponds to a different updated parameter subset $\theta_S$, with all other parameters frozen. $\DHeld$ decreases monotonically with $\DUnk$ across all configurations, confirming that factual plasticity---not the specific parameter group---mediates forgetting. Attention projections and the full attention block suppress plasticity most effectively. Within the FFN, gate+up drives substantially more plasticity than down. Full FFN and full model (standard SFT) recover the baseline forgetting pattern. Only Known provides the stability upper bound.
}
\label{tab:app_freezing}
\end{table}

\Cref{tab:app_freezing} reports results for individual attention projections ($q,k,v,o$), FFN sub-components (gate+up, down), the full attention block (attn), full FFN, and the standard full-model baseline. Across all configurations, $\DHeld$ tracks $\DUnk$ closely: as factual plasticity increases, forgetting increases proportionally, irrespective of which parameter group is updated. Individual attention projections suppress plasticity most aggressively ($\DUnk \approx 0.005$--$0.006$), yielding the highest $\DHeld$ values. The full attention block achieves comparable stability ($\DUnk = 0.010$), confirming that any restriction that prevents factual integration is sufficient. Within the FFN, the gate+up sub-component drives substantially more plasticity ($\DUnk = 0.242$) than the down projection ($\DUnk = 0.086$), consistent with the role of the gate+up pathway in writing factual content into the residual stream~\citep{geva2022transformerfeedforwardlayersbuild}. Updating the full FFN or the full model recovers standard SFT dynamics, with high plasticity and a corresponding 15-percentage-point drop in $\DHeld$.

These results reinforce the central claim of \secref{Modules_ablations}: the stability--plasticity tradeoff in factual fine-tuning is governed by how much new factual content the model integrates, not by the structural form of the constraint. Freezing is one mechanism for controlling that integration; self-distillation (\secref{self-distillation}) is another, achieving the same stability without sacrificing plasticity.

\section{Results Over Different Parameters for Self-distillation}
\label{app:different_sd_params}

Self-distillation introduces three hyperparameters: the snapshot epoch $i$ at which the teacher is frozen, the distillation weight $\lambda$, and the temperature $\tau$. \Cref{fig:app_sd_hparams} reports ablations over each parameter independently: the snapshot-epoch sweep is run at $\lambda=1$, $\tau=2$, and the $\lambda$ and $\tau$ sweeps at snapshot epoch $i=2$ (with $\tau=2$ and $\lambda=1$, respectively). Each row shows accuracy on $\DUnk$ (left, factual plasticity) and $\DHeld$ (right, factual stability) over training epochs.

\begin{wrapfigure}{r}{0.60\textwidth}
    \centering
    \includegraphics[width=0.60\textwidth]{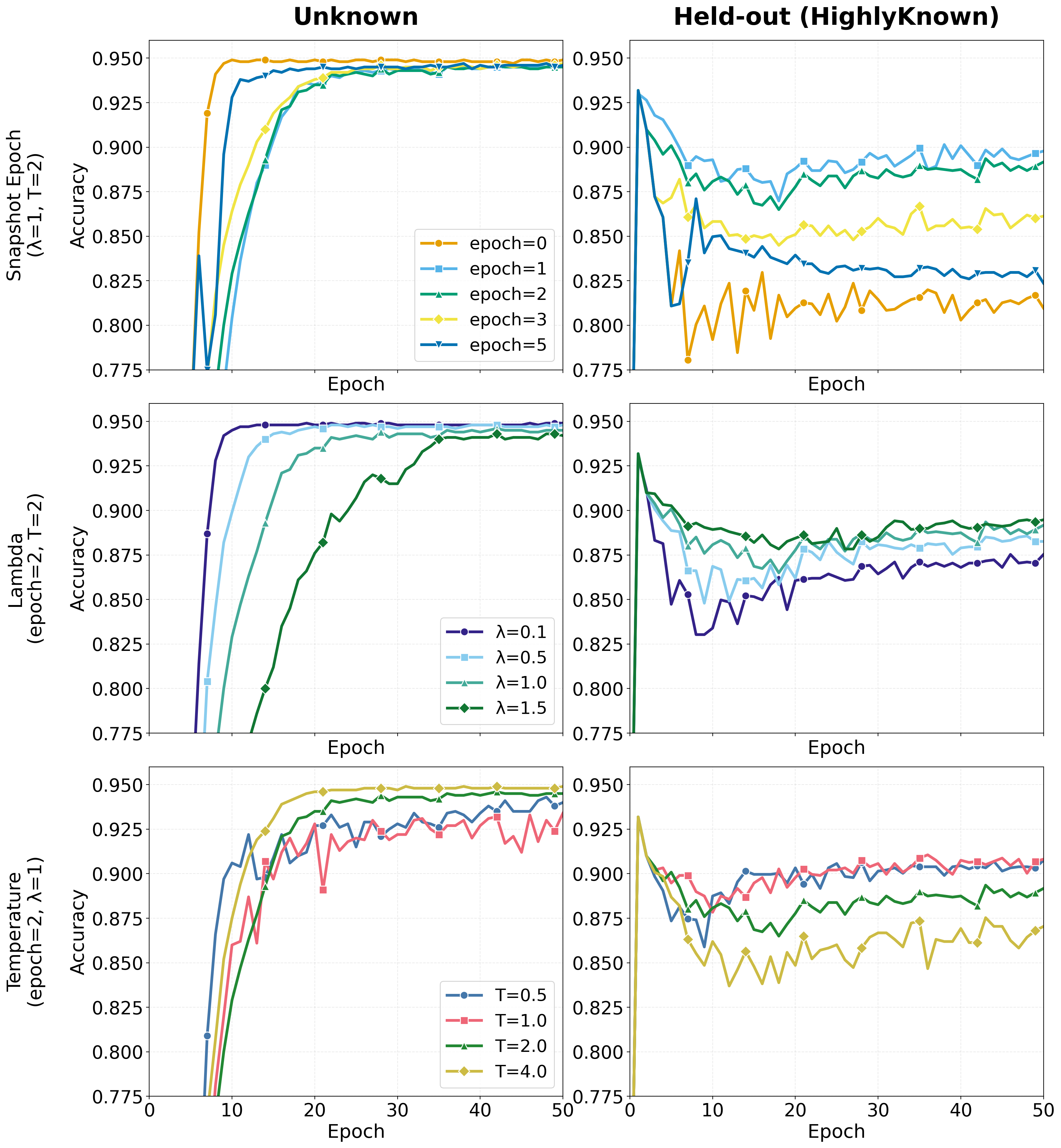}
    \caption{
    \textbf{Hyperparameter ablations for self-distillation.} Each row ablates one hyperparameter with the others fixed (see text). Left: $\DUnk$ accuracy (factual plasticity). Right: $\DHeld$ accuracy (factual stability).
    }
    \label{fig:app_sd_hparams}
    \vspace{-2em}
    \end{wrapfigure}
\paragraph{Snapshot epoch.}
When the teacher is frozen at $i=0$, $\DUnk$ acquisition is substantially slower than under all other snapshot choices, and $\DHeld$ is the lowest across the ablation.
In this setting, the distillation loss constrains the student toward a parameter space that predates task adaptation; since the model has yet to internalize the QA format, the lower $\DHeld$ may reflect difficulty in task learning rather than factual forgetting per se. Snapshot $i=1$ yields the highest $\DHeld$ values, with the model already adapted to the task format but not yet subject to sustained factual drift. Beyond $i=1$, held-out stability degrades monotonically with the snapshot index.

\paragraph{Distillation weight.}
The $\lambda$ ablation reveals a direct stability--plasticity tradeoff. Weaker regularization ($\lambda=0.1$, $\lambda=0.5$) allows rapid acquisition of $\DUnk$ but results in greater degradation of $\DHeld$. Stronger regularization ($\lambda=1.5$) substantially reduces forgetting but delays factual learning, requiring considerably more training epochs before $\DUnk$ accuracy reaches its asymptote. We select $\lambda=1$ as it achieves efficient factual acquisition alongside meaningful forgetting mitigation, without demanding an extended training budget.

\paragraph{Temperature.}
Higher temperatures ($\tau=2$, $\tau=4$) produce smoother and more stable learning trajectories on $\DUnk$, reducing epoch-to-epoch variance. However, $\DHeld$ is noticeably lower under high-$\tau$ conditions. Lower temperatures ($\tau=0.5$, $\tau=1$) yield better stability on $\DHeld$ while preserving comparable final accuracy on $\DUnk$. We adopt $\tau=0.5$ as our default; practitioners who prioritize stable factual learning trajectories may prefer higher temperatures at a modest additional forgetting cost. The best-performing point of the sweep itself is ($i=2$, $\lambda=1$, $\tau=1$). The default configuration used throughout the paper ($i=1$, $\lambda=1$, $\tau=0.5$) retains $\DHeld=0.908$ at epoch 50, about 2 pp below the $0.93$ peak; this is the value reported for self-distillation in \cref{tab:baselines} and the source of the figure quoted in \secref{self-distillation}.

\section{Parameter-Space Baselines: Protocols and Sweeps}
\label{app:baselines}

All baselines run in the 8K \textsc{EntityQuestions} setting of \secref{setting} (50/50 Known/Unknown mixture, learning rate $5\times10^{-5}$, identical supervision and evaluation), trained to convergence. \Cref{tab:baselines} summarizes each method at its best-performing hyperparameter.

\begin{table}[h]
\centering
\small
\begin{tabular}{lcc}
\toprule
Method & Final $\DHeld$ & Final $\DUnk$ \\
\midrule
Standard SFT & 0.780 & 0.946 \\
Weight decay (best strength: 50) & 0.804 & 0.909 \\
$\ell_2$ toward $\theta_i$ & 0.809 & 0.949 \\
EWC ($\DKnown$ Fisher proxy, best $\lambda$) & 0.826 & 0.948 \\
LoRA (best rank: 64) & 0.834 & 0.948 \\
Self-distillation (ours) & \textbf{0.908} & 0.934 \\
\bottomrule
\end{tabular}
\caption{\textbf{Output-space vs.\ parameter-space regularization} in the 8K EntityQuestions setting (peak $\DHeld \approx 0.93$). Each baseline at its best-performing hyperparameter.}
\label{tab:baselines}
\end{table}

\paragraph{EWC.}
Standard EWC~\citep{Kirkpatrick_2017} requires the old-task data (for the Fisher information), and SI~\citep{zenke2017continuallearningsynaptic} requires the training trajectory (for online importance accumulation); neither is accessible for pretraining, which rules out SI entirely and requires a proxy for EWC. We therefore estimate the empirical Fisher on a sample of $\DKnown$ as a stand-in for the pretrained factual knowledge. Across regularization strengths, EWC reproduces the $\ell_2$ pattern: the best-balanced setting reaches final $\DHeld = 0.826$ at full acquisition ($\DUnk = 0.948$), while stronger regularization ($\lambda = 10^5$) raises $\DHeld$ to $0.869$ only by collapsing acquisition ($\DUnk = 0.444$).

\paragraph{LoRA.}
Low-rank adaptation~\citep{hu2021loralowrankadaptationlarge} across ranks $\{16, 64\}$ applied to all modules. Consistent with \citet{biderman2024loralearnsforgets}, LoRA forgets somewhat less than full fine-tuning on a slower trajectory, but the pattern matches full SFT: final $\DHeld = 0.814$ (rank 16) and $0.834$ (rank 64) at $\DUnk \approx 0.95$.

\paragraph{Weight decay.}
AdamW decoupled weight decay across strengths $\{0.01, 0.1, 1, 10, 50, 100\}$. Moderate strengths leave forgetting essentially unchanged (final $\DHeld = 0.78$--$0.82$); the strongest setting (100) destroys training entirely. The best-performing strength (50) reaches $\DHeld = 0.804$ at $\DUnk = 0.909$.

\paragraph{$\ell_2$ toward $\theta_i$.}
The snapshot-anchored $\ell_2$ penalty of \secref{why_self-distillation} with $\lambda$ matched to self-distillation in gradient magnitude: final $\DHeld = 0.809$ at $\DUnk = 0.949$.

\paragraph{Summary.}
Every parameter-space method lands in the $0.78$--$0.84$ range on final $\DHeld$ (\cref{tab:baselines}), a gain of at most $\sim$5pp over standard SFT versus $\sim$13pp for self-distillation, and each faces the same stability--plasticity tradeoff when its regularization strength increases. The self-distillation row uses the paper's default configuration ($i=1$, $\lambda=1$, $\tau=0.5$; \appref{app:different_sd_params}), evaluated at epoch 50. This is the pattern predicted by the associative-memory account (\secref{am}): weight-space penalties bound the update magnitude uniformly, without regard to the representational overlap $\rho_{jA}$ that determines which stored facts are at risk.

\section{Real-Data Validation on Alpaca: Protocol and Trajectories}
\label{app:alpaca}

\paragraph{Data and annotation.}
We use 15K examples from the Alpaca dataset~\citep{taori2023alpaca}. Because factual content in instruction-tuning data is implicit (woven into instructions and responses rather than stated as clean relational facts), we first pass each sample through an LLM annotator (\textsc{Llama}-3.3-70B-Instruct via TogetherAI) that decides whether the sample carries factual knowledge and, if so, extracts the underlying (subject, relation, object) content. For each factual sample we generate an \textsc{EntityQuestions}-style QA probe, and apply SLiCK (\secref{Methodology}) with \textsc{Qwen}~2.5-1.5B to classify the model's knowledge state for that fact. A set of 1{,}000 factual probes is held out for evaluation; the held-out factual accuracy reported in \cref{tab:alpaca} is computed over the \emph{HighlyKnown} subset of these probes. We emphasize that no output of this pipeline is used during training: the mitigation methods receive only the raw Alpaca data, and SLiCK serves purely as a diagnostic.

\paragraph{Training and evaluation.}
All three configurations fine-tune \textsc{Qwen}~2.5-1.5B for five epochs with learning rate $3\times10^{-5}$ and batch size 16. Self-distillation uses the same hyperparameters as in \secref{self-distillation} with no additional tuning; attention-only freezing follows \secref{Modules_ablations}. After every epoch we evaluate (i) held-out factual accuracy and (ii) alignment quality via AlpacaEval~\citep{alpaca_eval} on 200 instructions, judged by \textsc{Llama}-3.3-70B-Instruct via TogetherAI against the dataset's reference outputs; the preferred rate counts wins plus half of ties. \Cref{tab:alpaca} reports AlpacaEval at a common early-stopped epoch (4), the epoch selected by validation preferred rate for the two fully-trained configurations (standard SFT and self-distillation); we apply the same epoch to attention-only freezing for comparability, as its trajectory is essentially flat (its own best preferred rate, $50.3\%$, occurs at epoch 2). Factual degradation is reported at the end of training, measured against the peak held-out accuracy observed across all three runs ($0.869$, attained under standard SFT at epoch 3): $0.869 \rightarrow 0.795$ (SFT), $0.852$ (self-distillation), and $0.860$ (attention-only), i.e.\ $8$, $2$, and $1$ pp at the two-decimal resolution appropriate for the $229$-probe evaluation set (one probe $\approx 0.4$pp). AlpacaEval on 200 instructions has run-to-run variability of roughly $\pm 1$--$2$pp: the same base model scored $39.8\%$, $38.8\%$, and $37.8\%$ across the three runs' initial evaluations, so differences of this order should be read with care.

\paragraph{Trajectories.}
Held-out factual accuracy over epochs 0--5 (epoch 0 = base model):
standard SFT $0.821 \rightarrow 0.821 \rightarrow 0.843 \rightarrow 0.869 \rightarrow 0.803 \rightarrow 0.795$;
self-distillation $0.821 \rightarrow 0.817 \rightarrow 0.834 \rightarrow 0.847 \rightarrow 0.843 \rightarrow 0.852$;
attention-only freezing $0.817 \rightarrow 0.856 \rightarrow 0.852 \rightarrow 0.852 \rightarrow 0.847 \rightarrow 0.860$.
The pattern matches the controlled setting: standard SFT peaks early and then degrades as factual content is absorbed, while both mitigations remain stable throughout. AlpacaEval preferred rates at epoch 4 are $50.5\%$ (SFT), $52.3\%$ (self-distillation), and $47.2\%$ (attention-only), against a base-model rate of $39.8\%$.

\section{Synthetic Entities Creation Procedure}
\label{app:synthetic_creation}

\begin{figure}[t]
\centering
\includegraphics[width=0.88\textwidth]{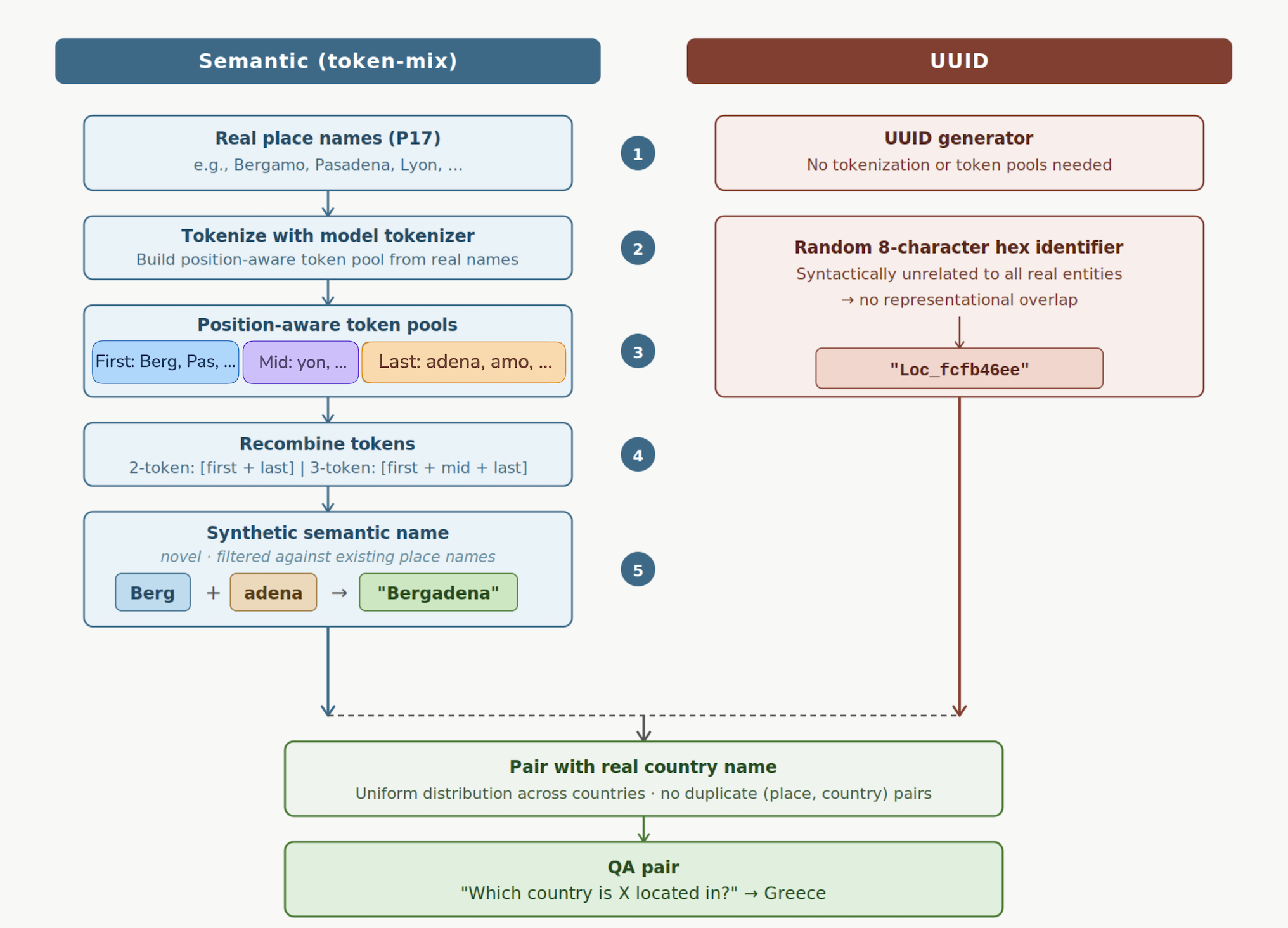}
\caption{
\textbf{Synthetic entity construction.}
\emph{Semantic (token-mix) keys} (left, steps 1--5): real P17 place names are tokenized, position-aware token pools are assembled, and novel names are formed by recombining tokens across names (e.g., \textbf{Bergadena} from \textbf{Berg}amo and Pas\textbf{adena}).
\emph{UUID keys} (right): random 8-character hex identifiers with no lexical overlap with real entities.
Both conditions are paired with real country names under an identical question template, so the entity key is the sole variable.
Candidate synthetic names that match any existing real place name are discarded.
}
\label{fig:synthetic_illustration}
\end{figure}

All controlled experiments in \secref{semantic_interference} use the P17 Wikidata relation (\emph{location} $\rightarrow$ \emph{country}: ``Which country is \emph{X} located in?''). We focus on a single relation deliberately: fixing the relation holds the question template, the answer domain, and the relational structure constant across all conditions, making the entity name (the \emph{key}) the sole experimental variable and the comparison between key types experimentally controlled.

To construct synthetic facts, we generate two types of entity names (\cref{fig:synthetic_illustration}). \textbf{Semantic (token-mix) keys} are built by tokenizing all real P17 place names and extracting three position-aware token pools: the \emph{first} token of each multi-token name, all \emph{middle} tokens, and the \emph{last} token. Novel place names are then formed by recombining tokens sampled from these pools, either as two-token sequences \texttt{[first, last]} or three-token sequences \texttt{[first, middle, last]}, and any candidate that coincides with an existing real place is discarded. The resulting names share sub-word structure with real locations: for example, tokens from \textbf{Berg}amo and Pas\textbf{adena} combine into \textbf{Bergadena}, a novel name that lexically resembles genuine place names and is thus expected to activate overlapping representations in the model. \textbf{UUID keys}, by contrast, are random identifiers of the form \texttt{Loc\_\textlangle{}8-hex\textrangle{}} (e.g., \texttt{Loc\_fcfb46ee}), with no syntactic resemblance to any real entity.
For the dose-response conditions of \secref{semantic_overlap}, we additionally generate \textbf{token-mix $k$-noise} keys ($k\in\{1,2\}$) by replacing $k$ of the tokens in each token-mix name with arbitrary identifier tokens drawn from the pool used for UUID construction, respecting token position; all other aspects of the construction (value assignment, filtering, balancing) are identical to the token-mix condition. In both conditions, each synthetic entity is paired with a real country drawn uniformly at random from those appearing in P17, subject to the constraint that the (place, country) pair is not already present in the original dataset; country assignments are rebalanced to maintain an approximately uniform distribution.

To verify that forgetting is driven by the key rather than the value, we additionally varied the country (value) condition: values were instantiated as real country names, semantic synthetic country names (constructed by the same token-mix process applied to country names), or UUID-style country labels, crossed with both key types. \Cref{tab:semantic_overlap} reports $\Delta\DHeld$ for all six combinations after training on $10^6$ synthetic facts. Across all value conditions, semantic keys consistently induce 38--41 pp of forgetting, while UUID keys yield approximately 4 pp forgetting regardless of what value is paired with them. This full dissociation indicates that, among the factors varied here, representational overlap in the entity name is the dominant driver of interference, while the answer side plays little role.

\begin{table}[h]
\centering
\small
\begin{tabular}{llc}
\toprule
\textbf{Key type} & \textbf{Value type} & $\Delta\DHeld\text{ (pp)}\downarrow$ \\
\midrule
Semantic & Real      & 38 \\
Semantic & Semantic  & 38 \\
Semantic & UUID      & 41 \\
\midrule
UUID     & Real      & 4 \\
UUID     & Semantic  & 4 \\
UUID     & UUID      & 4 \\
\bottomrule
\end{tabular}
\caption{
\textbf{Forgetting is driven by key semantics, not value type.}
$\Delta\DHeld$ after training on $10^6$ synthetic facts, for all combinations of key type (semantic vs.\ UUID) and value type (real / semantic / UUID country names). All conditions achieve $\DUnk = 100\%$ and $\DKnown \ge 90\%$, so differences in forgetting cannot be attributed to learning success.
}
\label{tab:semantic_overlap}
\end{table}

\section{Representational Geometry of Synthetic Keys}
\label{app:geometry}

\paragraph{Setup.}
For each key condition (real cities, token-mix, token-mix 1-noise, token-mix 2-noise, UUID) we extract residual-stream representations of $n=5{,}000$ entities per condition from the \emph{base} \textsc{Qwen}~2.5-1.5B model (no fine-tuning), at every layer, under two aggregations: the mean over the entity's sub-word tokens and the last sub-word token. Results below use the last-sub-word aggregation, which matches how key representations are read in \secref{why_self-distillation}; the mean aggregation agrees at the early and final layers but saturates mid-network, where all conditions approach ceiling similarity. Two additional runs ($n=2{,}000$, two seeds) replicate the cosine ordering at every layer; the probe ordering at this smaller scale is reproduced in one seed and flat in the other.

\begin{figure}[t]
\centering
\includegraphics[width=0.85\textwidth]{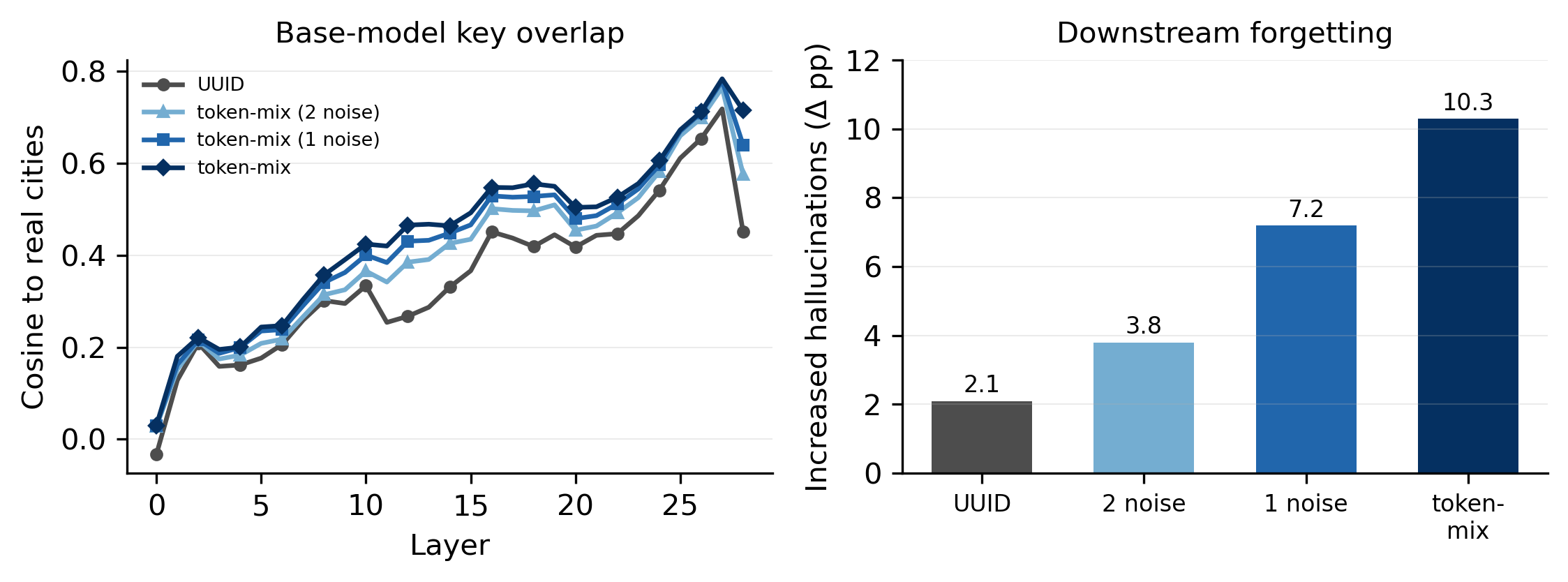}
\caption{\textbf{Dose-response: forgetting tracks base-model representational overlap.}
\textbf{Left:} per-layer cosine similarity of synthetic keys to real-city representations in the \emph{base} model, four graded-overlap conditions. \textbf{Right:} forgetting at $10^{4}$ injected facts increases monotonically with overlap.}
\label{fig:dose-layers}
\end{figure}

\paragraph{Cosine similarity to real cities.}
\Cref{fig:dose-layers} (left) reports the mean pairwise cosine similarity between entities of each synthetic condition and real cities. The ordering UUID $<$ 2-noise $<$ 1-noise $<$ token-mix holds at every layer past the earliest ones, with last-layer values $0.45$, $0.58$, $0.64$, $0.71$. Within-condition similarity to \emph{real--real} pairs behaves analogously.

\paragraph{Linear ``is a city'' probe.}
We train an $\ell_2$-regularized logistic probe on base-model representations to separate real cities from non-place entities, and evaluate the predicted $P(\text{city})$ on each synthetic condition. At layer 27, the deepest layer at which the probe is reliable, it transfers with the same ordering as the cosine analysis: real $0.91$, token-mix $0.71$, 1-noise $0.62$, 2-noise $0.50$, UUID $0.29$. The clean graded ordering holds in the final layers (25--28); at mid-network layers the probe itself is unreliable, with the UUID condition in particular oscillating widely. The cosine measure instead orders the conditions monotonically at every layer.

\paragraph{Takeaway.}
Both measures agree that the graded key conditions occupy graded positions in the base model's representation space before any fine-tuning, so the dose-response in forgetting (\cref{fig:dose-response}) tracks pre-existing representational overlap rather than an artifact of training. Forgetting values follow the convention of \cref{fig:scale-curves}, measuring the drop from the ${\approx}0.98$ plateau reached after task-format learning.

\section{Further Analyses on Self-Distillation}
\label{app:drift}

We report two sets of additional analyses that complement 
\secref{why_self-distillation}, both using the 10K semantic-overlap 
setting of \secref{semantic_overlap}: ablations on which part of the 
output distribution the distillation constraint needs to target, and an 
extended battery of drift metrics comparing SFT and self-distillation 
across representation and output space.

\paragraph{Which part of the output distribution matters?}
The main text establishes that $\ell_2$ weight regularization does not 
replicate the self-distillation benefit, suggesting the effect is not 
simply a consequence of smaller weight updates. We additionally ask 
whether the \emph{full} output distribution is necessary, or whether 
constraining a specific region suffices. This question connects to a 
broader literature on knowledge distillation: \citet{hinton2015distillingknowledgeneuralnetwork} 
showed that the ``dark knowledge'' carried by a teacher's soft 
probability distribution over non-target classes is more informative 
than the hard labels alone, precisely because it encodes the teacher's 
relative confidence over semantically related alternatives. In our 
setting, those alternatives are competing entity tokens --- and we 
hypothesize that it is constraining \emph{this} region, rather than 
the full distribution, that drives the forgetting reduction. We compare 
three variants of the distillation objective:

\textbf{Full self-distillation:} KL divergence over the complete 
vocabulary at each token position (\cref{eq:lwf_distill}).

\textbf{Top-$k$ distillation:} KL divergence restricted to the 
teacher's $k$ highest-probability tokens at each position. Let 
$T_k(j) = \arg\mathrm{top}_k\, p_{\theta_i}(\,\cdot\mid x_{<j})$ be 
the top-$k$ index set at position $j$, and let $p^{(k)}_{\theta_i,j}$, 
$p^{(k)}_{\theta,j}$ be the student and teacher distributions 
renormalized over $T_k(j)$. The distillation loss becomes
\begin{equation}
\mathcal{L}_{\mathrm{distill}}^{(k)}(\theta;\theta_i) = 
\mathbb{E}_{(x,y)\sim\mathcal{B}}\!\left[\frac{\tau^2}{|M(y)|}
\sum_{j\in M(y)}\mathrm{KL}\!\left(p^{(k)}_{\theta_i,j}\,\big\|\,
p^{(k)}_{\theta,j}\right)\right].
\end{equation}
We set $k = 0.05\%$ of the vocabulary (76 tokens), capturing between 
$91\%$ and $92\%$ of the teacher's probability mass per position.

\textbf{Random-$k$ distillation:} identical to top-$k$, but the 
76 tokens at each position are drawn uniformly at random rather than by 
probability rank. This condition controls for the \emph{number} of 
constrained logits while removing alignment with the teacher's 
high-probability region. Importantly, it does not reduce to standard 
SFT: the KL term remains active but constrains the student toward a 
uniform distribution over an arbitrary token subset, effectively 
injecting noise rather than meaningful structure.

Top-$k$ distillation fully replicates full self-distillation: forgetting 
on $\DHeld$ decreases to ${\approx}3$ pp while $\DUnk$ 
acquisition proceeds at the same pace. Random-$k$ distillation leaves 
forgetting at ${\approx}10$ pp, indistinguishable from 
standard SFT. What matters is therefore not the number of constrained 
logits but constraining the high-probability region specifically. This 
is consistent with \citet{hinton2015distillingknowledgeneuralnetwork}'s insight that soft 
targets carry informative relational structure --- here, the relative 
probabilities over competing entity tokens encode the teacher's 
``opinion'' about which alternatives are plausible, and preserving this 
opinion prevents the gradient updates for new entities from 
redistributing probability mass away from existing ones. 
\citet{buzzega2020darkexperiencegeneralcontinual, li2025bildbidirectionallogitsdifference} make a related observation in the continual 
learning setting, showing that dark experience replay --- which stores 
and replays soft teacher outputs rather than hard labels --- provides a 
stronger anti-forgetting signal than hard-label replay alone, precisely 
because the soft outputs encode richer relational structure. Our top-$k$
result sharpens this: what matters is not the full soft distribution,
but the high-probability region where semantically competing candidates reside.

\paragraph{Notation for drift metrics.}
Let $H^{(l)}_\theta(x) \in \mathbb{R}^d$ denote the hidden 
representation of the key entity's final token at transformer layer $l$ 
under model $\theta$, for input $x$. We write $H^{(l)}_0(x) = 
H^{(l)}_{\theta_0}(x)$ for the pretrained model and $H^{(l)}_i(x)$ for 
epoch $i$. All representations are extracted from the final token of the key entity and $\ell_2$-normalized before computing cosine similarities. The final token is the natural extraction point for entity representations in causal language models: unlike bidirectional architectures, where the full contextual representation of a phrase can reside at any of its token positions~\citep{kaplan2025followflowinformationflow}, causal attention ensures that only the last token has attended to all preceding tokens and thus carries the complete representation of the entity. We fix
$l = 14$: the middle layer of the 28-layer Qwen~2.5-1.5B model, where
the inner lexicon of LLMs is most reliably
encoded~\citep{kaplan2025tokenswordsinnerlexicon}. Let $\mathcal{P} = 
\{(x_u, x_h) : x_u \in \DUnk,\, x_h \in \DHeld,\, a(x_u) = a(x_h)\}$ 
be the set of unknown/held-out input pairs sharing the same ground-truth 
answer $a(\cdot)$, and $I = T_k^{\theta_0}(x_h)$ the pretrained model's 
top-$k$ token indices at the answer position for held-out input $x_h$.

The battery of metrics below is designed to triangulate the interference 
account from multiple angles: RD measures whether individual held-out 
representations move; NS measures whether their \emph{relative geometry} 
is preserved; MTD measures whether they drift specifically \emph{toward} 
new-entity representations; and Rank-$\rho$, NTR, and JSD measure 
whether this interference is visible at the output level. Convergent 
evidence across all six metrics strengthens the structural account over 
capacity-based or behavioral alternatives.

\paragraph{Hidden-state drift (RD).}
RD (defined in \secref{why_self-distillation}) measures the average cosine distance between held-out entity
representations before and after fine-tuning. Cosine distance captures 
directional shifts in representation space, which are more sensitive to 
semantic change than Euclidean distance~\citep{kim2025measuringrepresentationalshiftscontinual}. Under 
SFT, $\mathrm{RD}$ reaches ${\approx}11\%$; under self-distillation it 
stabilizes near $5\%$. The residual $5\%$ present in all conditions 
reflects task-format learning and is not associated with forgetting, as 
confirmed by the UUID condition in \secref{semantic_overlap}.

\paragraph{Neighborhood structure (NS).}
\begin{equation}
\mathrm{NS}_i = \frac{1}{|\mathcal{S}|^2}\sum_{x, x' \in \mathcal{S}}
\bigl|\cos\!\bigl(H^{(14)}_i(x), H^{(14)}_i(x')\bigr) - \cos\!\bigl(
H^{(14)}_0(x), H^{(14)}_0(x')\bigr)\bigr|,
\end{equation}
where $\mathcal{S} \subseteq \DHeld$ is a random subsample. NS measures 
how much the pairwise geometric structure among held-out entity 
representations changes during training --- capturing whether entities 
move \emph{together} (high RD, low NS) or \emph{rearrange relative to 
each other} (high NS). Under SFT, NS reaches ${\approx}0.07$; under 
self-distillation it stabilizes near $0.02$. The fact that both RD and 
NS are elevated under SFT indicates that held-out representations are 
not merely translating as a block, but genuinely reorganizing --- 
consistent with localized interference rather than a global drift of all 
representations. This pattern mirrors findings in \citet{masip2026puttingfaceforgettingcontinual}, 
who show that catastrophic forgetting in classification networks manifests
as reorganization concentrated among semantically similar classes, rather
than uniform drift.

\paragraph{Mean targeted drift (MTD).}
\begin{equation}
\mathrm{MTD}_i = \mathbb{E}_{(x_h, x_s)}\bigl[\cos\!\bigl(H^{(14)}_i
(x_h), H^{(14)}_i(x_s)\bigr) - \cos\!\bigl(H^{(14)}_0(x_h), 
H^{(14)}_0(x_s)\bigr)\bigr],
\end{equation}
where $x_h \in \DHeld$ and $x_s$ indexes the synthetic (newly trained) 
entities. MTD directly tests the directionality of the interference: 
does the representational drift push held-out entities \emph{toward} or 
\emph{away from} the new entities? Under SFT, MTD drops to 
${\approx}{-}0.09$: held-out and synthetic
representations diverge --- consistent with the model carving out 
distinct new-entity clusters that displace nearby held-out 
representations sideways rather than absorbing them. Under 
self-distillation, MTD remains near $0$, indicating that the 
cross-group representational relationship is largely preserved. This 
divergence pattern is consistent with \citet{nishi2025representationshatteringtransformerssynthetic}, 
who show that knowledge editing causes ``representation shattering'' --- 
a reorganization of the local neighborhood structure around edited 
entities --- and that nearby unedited entities are displaced as a 
byproduct.

\paragraph{Output-distribution drift (Rank-$\rho$).}
For each pair $(x_u, x_h) \in \mathcal{P}$, Rank-$\rho$ measures the 
Spearman rank correlation between the current and pretrained models'
logit rankings over unknown-fact and held-out inputs, restricted to the
shared index set $I$:
\begin{equation}
\mathrm{Rank}\text{-}\rho = \frac{1}{|\mathcal{P}|}\sum_{(x_u, x_h)
\in\mathcal{P}} \rho_{\mathrm{Spearman}}\!\bigl(z_\theta(x_u)_I,\; 
z_{\theta_0}(x_h)_I\bigr),
\label{eq:rank_rho}
\end{equation}
where $z_\theta(x)_I$ denotes the logits of $\theta$ at the answer 
position restricted to $I$. A rising Rank-$\rho$ indicates that the 
model's output for new-fact inputs increasingly ranks entity candidates 
in the same order as the pretrained model ranked them for held-out 
inputs --- a signature of the two entity neighborhoods converging in 
output space, which is precisely the interference pattern the structural 
account predicts. Under SFT, Rank-$\rho$ rises from ${\approx}0.32$ to 
${\approx}0.40$; under self-distillation it stays near $0.32$.

\begin{figure}[t]
\centering
\includegraphics[width=\textwidth]{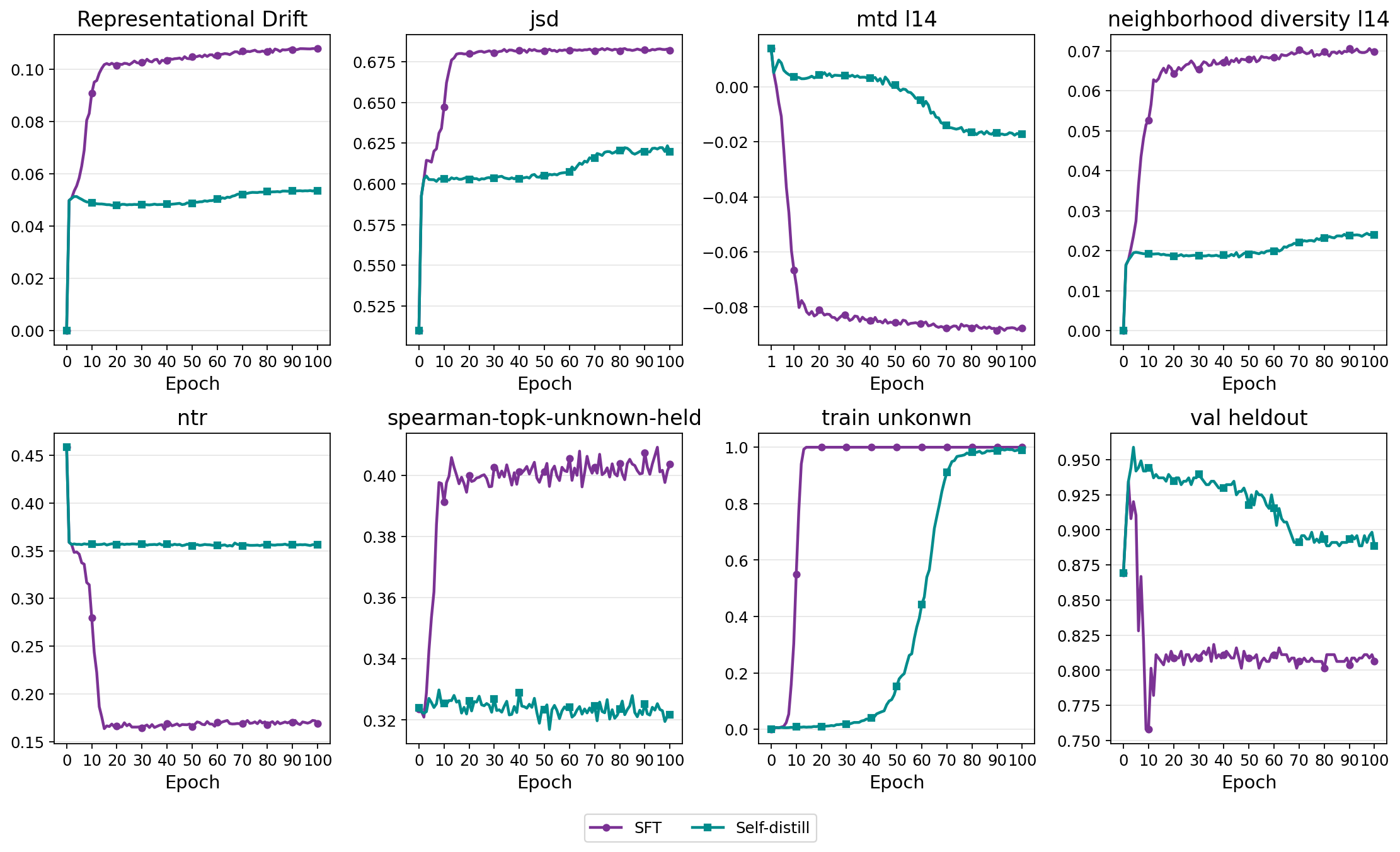}
\caption{
\textbf{Extended drift metrics under standard SFT and self-distillation} 
over 100 training epochs on the 10K semantic-overlap setting 
(\secref{semantic_overlap}). \textit{Top row, left to right:} 
Hidden-state drift (RD), Jensen-Shannon divergence (JSD), mean targeted 
drift (MTD), and neighborhood structure (NS) --- all at layer~14. 
\textit{Bottom row, left to right:} Neighborhood token ratio (NTR), 
output-distribution drift (Rank-$\rho$), training accuracy on $\DUnk$, 
and validation accuracy on $\DHeld$. All metrics consistently show less 
drift under \textcolor{teal}{self-distillation} than under 
\textcolor{violet}{standard SFT}. Factual acquisition on $\DUnk$ 
proceeds at a comparable pace across conditions.
}
\label{fig:drift_metrics}
\end{figure}

\paragraph{Neighborhood token ratio (NTR).}
\begin{equation}
\mathrm{NTR} = \mathbb{E}_{(x_u, x_h) \in \mathcal{P}}\!\left[
\frac{|T_k^\theta(x_u) \cap T_k^\theta(x_h)|}{k}\right],
\end{equation}
where $T_k^\theta(x)$ is the current model's top-$k$ token set at the 
answer position for input $x$. NTR complements Rank-$\rho$ by measuring 
overlap in the \emph{current} model's candidate sets (rather than 
correlation with the pretrained reference): do new-fact queries and 
held-out queries share the same answer candidates under the fine-tuned 
model? At initialization, the two groups share ${\approx}45\%$ of their 
top-$k$ candidates. Under SFT, NTR drops to ${\approx}17\%$ as the 
model learns specialized answer distributions for the new entities; 
under self-distillation it remains near $36\%$. The drop under SFT 
reflects the model pulling new-entity outputs into a distinct region of 
output space --- consistent with the MTD finding that held-out and 
synthetic representations diverge.

\paragraph{Jensen-Shannon divergence (JSD).}
\begin{equation}
\mathrm{JSD} = \mathbb{E}_{(x_u, x_h) \in \mathcal{P}}\!\left[
\mathrm{JSD}\!\left(p_\theta(x_u)_I \;\big\|\; p_{\theta_0}(x_h)_I
\right)\right],
\end{equation}
where $p_\theta(x)_I$ is the softmax distribution of $\theta$ at the 
answer position renormalized over $I$. JSD captures differences in the 
full probability shape rather than just rankings (complementing 
Rank-$\rho$). Both conditions show an upward trend, as the model 
naturally diverges from the fixed pretrained reference as it learns new 
facts. However, SFT reaches ${\approx}0.68$ versus ${\approx}0.63$ 
under self-distillation. The smaller gap here compared to Rank-$\rho$ 
and NTR reflects the fact that JSD is sensitive to all probability mass, 
including the task-format learning component shared across conditions; 
Rank-$\rho$ and NTR, by focusing on the entity-candidate region, isolate 
the interference-specific signal more cleanly.

\paragraph{Training and evaluation curves.}
The \textit{train unknown} and \textit{val heldout} subplots in 
\cref{fig:drift_metrics} confirm that both conditions achieve comparable 
factual acquisition on $\DUnk$ --- self-distillation converges somewhat 
later --- while held-out accuracy on $\DHeld$ is substantially better 
preserved under self-distillation, consistent with the main results of 
\secref{self-distillation}. Taken together, all six metrics tell a 
consistent story: SFT on semantically overlapping entities produces 
representational and output-space interference that is absent under
self-distillation, consistent with this interference being a primary
driver of factual forgetting.

\section{Derivations for the Hebbian Associative-Memory Account}
\label{app:theory}

This appendix collects the derivations referenced in \secref{am}. Each subsection is
self-contained and can be read independently.

\paragraph{Notation.} The main text needs only the held-out subject's key, and writes $x_h$ for
that subject. The derivations below also involve its stored answer and its margin, so we label
the held-out \emph{fact} by $A$, with subject $x_A$, answer $a_A$, and key $k_A:=k_{x_A}$; the
correspondence is $x_A=x_h$, $k_A=k_{x_h}$, $\rho_{jA}=\rho_{jh}$, and $S_A=S_h$, and
$\mathcal{A}$ denotes the answer set over which the free index $y$ ranges. Eq.~\eqref{eq:factorize}
gives the effect of a single injection $j$; $\delta_y(A)=\sum_{j=1}^{M}\delta^{(j)}_y(x_h)$ below
is the cumulative perturbation from all $M$ injections, which \secref{am} describes in prose.

\subsection{Hebbian update derivation}
\label{app:hebbian-update}

\begin{lemma}[SFT update is a rank-one Hebbian increment]
\label{lem:hebbian}
Consider one gradient step of softmax cross-entropy on a new fact $(x_j,a_j)$ with learning
rate $\eta$, treating $W$ in Eq.~\eqref{eq:W} as a free linear readout and holding the key map
$x\mapsto k_x$ fixed. The induced update is
\begin{equation}
\label{eq:app-update}
\Delta W_j \;=\; \eta\,(u_{a_j}-\bar u_j)\,k_{x_j}^{\top},
\qquad
\bar u_j \;:=\; \sum_{y\in\mathcal{A}} p_y(x_j)\,u_y,
\qquad
p(x_j)=\mathrm{softmax}(s(x_j)).
\end{equation}
\end{lemma}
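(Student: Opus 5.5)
The plan is to compute the gradient of the cross-entropy loss with respect to $W$ by the chain rule through the logits, and then take one descent step. Fix the training example $(x_j,a_j)$ and the key $k:=k_{x_j}$, which does not depend on $W$ since the key map is held fixed. For each answer $y\in\mathcal{A}$, Eq.~\eqref{eq:W} gives the logit $s_y(x_j)=u_y^{\top}Wk$. The loss is
\[
\ell(W)=-\log p_{a_j}(x_j)=-s_{a_j}(x_j)+\log\sum_{y\in\mathcal{A}}\exp s_y(x_j).
\]

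First, I will record the standard softmax cross-entropy identity
\[
\frac{\partial \ell}{\partial s_y}=p_y(x_j)-\indicator{y=a_j}.
\]
Second, I will note that each logit is linear in $W$, with
\[
\frac{\partial s_y}{\partial W}=u_y k^{\top}.
\]
This follows from $u_y^{\top}Wk=\mathrm{tr}(W\,k u_y^{\top})$. Combining the two by the chain rule gives
\[
\nabla_W\ell=\sum_{y}\bigl(p_y(x_j)-\indicator{y=a_j}\bigr)u_y k^{\top}=(\bar u_j-u_{a_j})\,k^{\top},
\]
using the definition $\bar u_j=\sum_y p_y(x_j)u_y$. A gradient step $\Delta W_j=-\eta\nabla_W\ell$ then yields Eq.~\eqref{eq:app-update} exactly. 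The result is rank one because it is an outer product of a single vector in answer space with the single key $k_{x_j}^{\top}$.

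The only real obstacle is interpretive rather than computational: making precise what ``treating $W$ as a free linear readout'' means. In the actual network $W$ is only an effective operator, so the real update on parameters $\theta$ maps to a change in $W$ through a Jacobian, and it also moves the keys. I will state explicitly that the lemma is exact for the idealized model in which $W$ itself is the trainable parameter and the keys are frozen. For the real network it is the first-order (in $\eta$) description under those same assumptions; any Jacobian-induced preconditioning and any key drift are $O(\eta)$ corrections that the lemma deliberately ignores. With this stated, Eq.~\eqref{eq:factorize} follows immediately by sandwiching $\Delta W_j$ between $u_y^{\top}$ and $k_{x_h}$.
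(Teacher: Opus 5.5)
Your proof is correct and follows the paper's argument exactly: the softmax cross-entropy logit gradient $p_y(x_j)-\indicator{y=a_j}$, then the chain rule through the linear logits $s_y=u_y^{\top}Wk_{x_j}$ to get $(\bar u_j-u_{a_j})k_{x_j}^{\top}$, then one descent step. One caution on your closing remark about the real network: Jacobian preconditioning ($\Delta W\approx-\eta JJ^{\top}\nabla_W\ell$) and key drift are of the same order $\eta$ as the main term, not subleading, so it is the modeling assumptions that remove them, not the first-order expansion.
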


\begin{proof}
Let $\mathcal{L}_j$ denote the softmax-cross-entropy loss on $(x_j,a_j)$ with logits
$s_y=u_y^{\top}Wk_{x_j}$. The gradient with respect to logits is
\begin{equation*}
\frac{\partial\mathcal{L}_j}{\partial s_y} \;=\; p_y(x_j)-\indicator{y=a_j}.
\end{equation*}
By the chain rule,
\begin{equation*}
\nabla_W \mathcal{L}_j
\;=\;\sum_{y\in\mathcal{A}}\big(p_y(x_j)-\indicator{y=a_j}\big)\,u_y\, k_{x_j}^{\top}
\;=\;(\bar u_j-u_{a_j})\,k_{x_j}^{\top},
\end{equation*}
and $\Delta W_j=-\eta\nabla_W\mathcal{L}_j$ gives~\eqref{eq:app-update}.
\end{proof}

The identification is structural: the update~\eqref{eq:app-update} is exactly a rank-one
outer-product increment, the same primitive that builds the stored memory
$W=\sum_x u_{f^\star(x)}k_x^{\top}$ in Eq.~\eqref{eq:W}. SFT on $M$ new facts yields, to first
order, $W'=W+\sum_{j=1}^{M}\Delta W_j$. The two approximations are (i) first order in $\eta$
(per-example rank-ones rather than a fully entangled trajectory), and (ii) the key map is held
fixed during fine-tuning.

\subsection{Bias and variance of \texorpdfstring{$\delta_y(A)$}{delta\_y(A)}}
\label{app:bias-var}

We work under three conditions on the SFT corpus, the first two structural and the third a
mild regularity:
\begin{itemize}
\item[(a)] (Disjoint held-out answer.) $a_A\notin\{a_j\}_{j=1}^M$.
\item[(b)] (Distinct injected answers.) $a_j\neq a_{j'}$ for $j\neq j'$.
\item[(R)] (Cross-prior regularity.) There exists $C_R\ge 0$ such that for every $j\in[M]$ and
every $a\in\mathcal{A}$,
\begin{equation}
\label{eq:app-R}
\sum_{j'\neq j} p_a(x_{j'}) \;\le\; C_R.
\end{equation}
\end{itemize}
Conditions~(a) and~(b) are idealizations of our experimental corpora: in the synthetic settings, values are country names that repeat across injected facts, and held-out answers are drawn from the same set, so both conditions are violated in practice. We adopt them to isolate the overlap mechanism in its cleanest form. With repeated answers, the Gram matrix $B$ below acquires additional unit off-diagonal entries whose row sums grow with the per-answer repetition count; the bound then holds with a larger constant that depends on this count, and the $S_A$-monotone form of the variance is unchanged at any fixed repetition profile. Condition~(R) says that the total prior mass that any single answer accumulates across the other $M{-}1$ new
subjects is bounded; a sufficient condition is $\max_{j,\,a} p_a(x_j)\le C_R/M$, which is plausible in our regime since injected facts are \emph{Unknown} (the model assigns low probability even to their target answers before fine-tuning).

\begin{proposition}[Bias-variance decomposition]
\label{prop:bias-var}
Define $\delta_y(A):=u_y^{\top}(W'-W)k_A$, the perturbation of the readout score of answer $y$
on the held-out subject $x_A$, and write
$\delta_y(A)=\mu_y(A)+\xi_y(A)$ with $\mu_y(A):=\mathbb{E}_u[\delta_y(A)]$ and
$\xi_y(A):=\delta_y(A)-\mu_y(A)$, where the expectation is over random-spherical unembeddings
($u_a$ unit-norm with $\mathbb{E}[u_a u_a^{\top}]=I_d/d$).

\emph{(a) Bias (exact).} Under (a) and (b), for every $y\in\mathcal{A}$,
\begin{equation}
\label{eq:app-mu-y}
\mu_y(A) \;=\; \eta\sum_{j=1}^{M}\big(\indicator{y=a_j}-p_y(x_j)\big)\,\rho_{jA}.
\end{equation}
In particular, for $y=a_j$ (an injected answer),
\begin{equation}
\label{eq:app-mu}
\mu_{a_j}(A)
\;=\;\eta\,(1-p_{a_j}(x_j))\,\rho_{jA}
\;-\;\eta\sum_{j'\neq j} p_{a_j}(x_{j'})\,\rho_{j'A},
\end{equation}
and for $y\notin\{a_j\}_{j=1}^M$,
\begin{equation}
\label{eq:app-mu-other}
\mu_y(A) \;=\; -\eta\sum_{j=1}^{M} p_y(x_j)\,\rho_{jA}.
\end{equation}
The diagonal term in~\eqref{eq:app-mu} is the systematic ``pull'' of fact $j$ on the readout
for $A$, linear in $\rho_{jA}$, with prefactor $1-p_{a_j}(x_j)\in(0,1)$ that vanishes only as
SFT saturates $p_{a_j}(x_j)\to 1$. Under (R), the cross term in~\eqref{eq:app-mu} and the
bias~\eqref{eq:app-mu-other} are bounded in absolute value by $\eta\,(1+C_R)\max_j|\rho_{jA}|$.

\emph{(b) Variance (upper bound).} Under (a), (b), and (R), for every $y\in\mathcal{A}$,
\begin{equation}
\label{eq:app-var}
\mathrm{Var}_u[\xi_y(A)] \;\le\; \frac{C\,\eta^2}{d}\,S_A,
\qquad
S_A := \sum_{j=1}^{M}\rho_{jA}^2 \;=\; M\,\overline{\rho_A^2},
\qquad
C \;\le\; 2C_R+3.
\end{equation}
\end{proposition}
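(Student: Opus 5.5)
The plan is to substitute Lemma~\ref{lem:hebbian} into the definition of $\delta_y(A)$ and then treat the result as a linear form in the random unembeddings. Summing the rank-one increments gives
\[
\delta_y(A)=\eta\sum_{j=1}^{M}\rho_{jA}\,u_y^{\top}(u_{a_j}-\bar u_j)
=\eta\sum_{a\in\mathcal{A}}\beta_a\,u_y^{\top}u_a,
\qquad
\beta_a:=\sum_{j=1}^{M}c_{ja}\,\rho_{jA},
\]
where $c_{ja}:=\indicator{a=a_j}-p_a(x_j)$. Consistent with the fixed-key, first-order idealization of the lemma, I will treat the prediction probabilities $p_a(x_j)$ and the overlaps $\rho_{jA}$ as fixed. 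The expectation in the statement is then taken only over the spherical unembeddings $u_a$, which I take to be independent across answers.

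\textbf{Bias.} Since $\|u_y\|=1$ and $\mathbb{E}[u_y^{\top}u_a]=0$ for $a\neq y$, only the $a=y$ term survives, so $\mu_y(A)=\eta\beta_y$. This is exactly \eqref{eq:app-mu-y}. Condition~(a) ensures that $a_A$ never enters as an injected answer. Condition~(b) ensures that $y=a_j$ picks out a unique $j$, which gives the split into \eqref{eq:app-mu} and \eqref{eq:app-mu-other}.

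For the bounds under (R), I handle the cross terms by H\"older's inequality. The cross term in \eqref{eq:app-mu} satisfies
\[
\Big|\sum_{j'\neq j}p_{a_j}(x_{j'})\rho_{j'A}\Big|\le C_R\max_j|\rho_{jA}|.
\]
For \eqref{eq:app-mu-other}, I apply (R) after isolating a single $j$, and bound that remaining term by $p_y(x_j)\le 1$. Together these give the constant $(1+C_R)\max_j|\rho_{jA}|$.

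\textbf{Variance.} The fluctuation is
\[
\xi_y(A)=\eta\sum_{a\neq y}\beta_a\,u_y^{\top}u_a .
\]
Conditioning on $u_y$ and using $\mathbb{E}[u_au_b^{\top}]=\indicator{a=b}\,I_d/d$ for $a,b\neq y$, the variance is exactly $\frac{\eta^2}{d}\sum_{a\neq y}\beta_a^2$. It therefore suffices to show $\|\beta\|_2^2\le C\,S_A$.

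I split $\beta_a=D_a-P_a$, with $D_a:=\sum_j\indicator{a=a_j}\rho_{jA}$ and $P_a:=\sum_jp_a(x_j)\rho_{jA}$.

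\begin{itemize}
\item By (b), each $a$ has at most one injected $j$, so $\sum_aD_a^2=S_A$.
\item For $P_a$, weighted Cauchy--Schwarz gives $P_a^2\le\big(\sum_jp_a(x_j)\big)\big(\sum_jp_a(x_j)\rho_{jA}^2\big)$. Summing over $a$, I exchange the order of summation. For each $j$, the factor $\sum_{j'}p_a(x_{j'})$ is at most $p_a(x_j)+C_R$ by (R), and $\sum_ap_a(x_j)=1$. Together these yield $\sum_aP_a^2\le(1+C_R)S_A$.
\item For the cross term $-2\sum_aD_aP_a$, the crude bound $(x-y)^2\le 2x^2+2y^2$ only gives $C\le 2C_R+4$. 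To reach the sharper $2C_R+3$, I expand the square exactly and bound $2|\sum_aD_aP_a|$ by Cauchy--Schwarz/AM--GM. I expect to use the structure $\sum_aD_aP_a=\sum_{j,j'}p_{a_j}(x_{j'})\rho_{jA}\rho_{j'A}$, i.e.\ a quadratic form in a matrix $B$ whose row sums are controlled by (R), bounded via Schur's test.
\end{itemize}

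\textbf{Main obstacle.} I expect the hardest part to be this cross-term bookkeeping: getting a constant that depends only on $C_R$ and not on $M$ or $|\mathcal{A}|$. The key is to apply (R) to row sums of the Gram-like matrix $B_{jj'}=p_{a_j}(x_{j'})$ rather than entrywise. A secondary, more conceptual point is that the $p_a(x_j)$ themselves depend on the $u$'s. I will state explicitly that they are treated as fixed (conditioned on), which matches the approximations already adopted in Lemma~\ref{lem:hebbian}.
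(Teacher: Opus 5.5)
Your reduction is the paper's. Your $\beta_a$ is its $v_a^{\top}k_A$, and the bias and its (R)-bounds go through as you describe. The variance equals $\frac{\eta^2}{d}\sum_{a\neq y}\beta_a^2\le\frac{\eta^2}{d}\,\rho_A^{\top}B\rho_A$ with $B=I-Q-Q^{\top}+G$, where $Q_{jj'}=p_{a_j}(x_{j'})$ and $G_{jj'}=\langle p(x_j),p(x_{j'})\rangle$. Your crude bound already gives the qualitative claim, with $C\le 2C_R+4$.

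\textbf{The gap is in the step you flagged for the sharper constant $2C_R+3$.} You plan to bound the cross term in absolute value, $2|\sum_a D_aP_a|=2|\rho_A^{\top}Q\rho_A|$, by Schur's test or weighted AM--GM. That gives at best $2\sqrt{1+C_R}\,S_A$, since rows of $Q$ sum to at most $1+C_R$ and columns to at most $1$ by (b). Hence you get $C\le(1+\sqrt{1+C_R})^2$, which is strictly larger than $2C_R+3$ whenever $C_R<3$. That is exactly the small-$C_R$ regime the paper argues is realistic.

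No absolute-value treatment can fix this. Take $M=1$, so (R) holds with $C_R=0$ and the target is $C\le 3$. As $p_{a_1}(x_1)\to 1$, the cross term $2p_{a_1}(x_1)\rho_{1A}^2$ tends to $2S_A$. Your other two pieces already use $2S_A$ of the $3S_A$ budget, so the total reaches $4S_A$.

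\textbf{The missing idea is to keep the sign of the diagonal part of the cross term.} Write
$-2\sum_a D_aP_a=-2\sum_j p_{a_j}(x_j)\rho_{jA}^2-\sum_{j\neq j'}\bigl(p_{a_j}(x_{j'})+p_{a_{j'}}(x_j)\bigr)\rho_{jA}\rho_{j'A}$.
The first term is nonpositive and can simply be dropped.

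The remaining matrix is symmetric and entrywise nonnegative with zero diagonal. Its row sums are $\sum_{j'\neq j}p_{a_j}(x_{j'})+\sum_{j'\neq j}p_{a_{j'}}(x_j)\le C_R+1$ by (R) and (b), so it contributes at most $(1+C_R)S_A$. Adding your bounds $\sum_aD_a^2=S_A$ and $\sum_aP_a^2\le(1+C_R)S_A$ gives exactly $2C_R+3$. (Schur's test on this off-diagonal part alone even gives $2\sqrt{C_R}\le 1+C_R$.)

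The paper does this in one step by applying Gershgorin to the full Gram matrix $B$. Its diagonal entry $1-2p_{a_j}(x_j)+\|p(x_j)\|_2^2\le 2$ absorbs the negative term automatically, and its off-diagonal row sums are at most $2C_R+1$.
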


\begin{proof}
\textbf{Regrouping by answer token.}
Reindex the cumulative update $W'-W = \sum_{j=1}^{M}\Delta W_j$
(with $\Delta W_j$ from Eq.~\eqref{eq:app-update}) by answer rather than by fact:
\begin{equation}
\label{eq:app-Wregroup}
W'-W \;=\; \eta\sum_{a\in\mathcal{A}} u_a\, v_a^{\top},
\qquad
v_a \;:=\; \sum_{j=1}^{M} \alpha_{a,j}\, k_{x_j},
\qquad
\alpha_{a,j} \;:=\; \indicator{a=a_j}-p_a(x_j).
\end{equation}
Substituting,
\begin{equation}
\label{eq:app-delta-regroup}
\delta_y(A) \;=\; \eta\sum_{a\in\mathcal{A}} (u_y^{\top} u_a)\, (v_a^{\top} k_A).
\end{equation}

\textbf{Bias.}
Under the unembedding isotropy, $\mathbb{E}_u[u_y^{\top} u_a]=\indicator{y=a}$. Taking
expectations of~\eqref{eq:app-delta-regroup},
\begin{equation*}
\mu_y(A) \;=\; \eta\, v_y^{\top} k_A \;=\; \eta\sum_{j=1}^{M}\alpha_{y,j}\,\rho_{jA},
\end{equation*}
which is~\eqref{eq:app-mu-y}. The special cases~\eqref{eq:app-mu} and~\eqref{eq:app-mu-other}
follow by substituting $y=a_j$ (condition (b) collapses $\indicator{a_j=a_{j'}}$ to $j'=j$)
and $y\notin\{a_j\}$ (the indicator vanishes everywhere). For the bounds stated below the
proposition: by the $L^1/L^\infty$ inequality and condition (R),
\begin{equation*}
\Big|\sum_j p_y(x_j)\,\rho_{jA}\Big|
\;\le\;\Big(\sum_j p_y(x_j)\Big)\max_j |\rho_{jA}|
\;\le\; (1+C_R)\max_j |\rho_{jA}|,
\end{equation*}
using $\sum_j p_y(x_j) = p_y(x_{j_0})+\sum_{j\neq j_0} p_y(x_j)\le 1+C_R$ for any $j_0$.

\textbf{Variance: reduction to a Gram matrix.}
Subtracting the mean from~\eqref{eq:app-delta-regroup},
\begin{equation*}
\xi_y(A) \;=\; \eta\sum_{a\neq y}(u_y^{\top} u_a)\,(v_a^{\top} k_A)
\end{equation*}
(the $a=y$ term is exactly $\mu_y$). Under isotropy, $\{u_y^{\top} u_a\}_{a\neq y}$ are
pairwise uncorrelated with $\mathrm{Var}_u[u_y^{\top} u_a]=1/d$, hence
\begin{equation}
\label{eq:app-var-step1}
\mathrm{Var}_u[\xi_y(A)] \;=\; \frac{\eta^2}{d}\sum_{a\neq y}(v_a^{\top} k_A)^2
\;\le\; \frac{\eta^2}{d}\,\|V k_A\|^2,
\end{equation}
where $V$ is the operator with columns $\{v_a\}_{a\in\mathcal{A}}$. Expanding the squared norm,
\begin{equation}
\label{eq:app-VkA}
\|V k_A\|^2 \;=\; \sum_{a\in\mathcal{A}}(v_a^{\top} k_A)^2
\;=\; \sum_{j,j'=1}^{M}\rho_{jA}\,\rho_{j'A}\,B_{jj'}
\;=\; \rho_A^{\top} B\, \rho_A,
\end{equation}
where $\rho_A:=(\rho_{1A},\dots,\rho_{MA})^{\top}$ and $B\in\mathbb{R}^{M\times M}$ is the Gram
matrix
\begin{equation}
\label{eq:app-B}
B_{jj'} \;:=\; \sum_{a\in\mathcal{A}}\alpha_{a,j}\,\alpha_{a,j'}
\;=\; \indicator{a_j=a_{j'}}-p_{a_j}(x_{j'})-p_{a_{j'}}(x_j)+\langle p(x_j),p(x_{j'})\rangle.
\end{equation}
The last equality expands $\alpha_{a,j}\alpha_{a,j'}$ and sums over $a$ (the indicator product
collapses to $\indicator{a_j=a_{j'}}$ because both indicators are simultaneously $1$ only when
$a=a_j=a_{j'}$). Since $B$ is symmetric PSD (a Gram matrix),
$\rho_A^{\top} B\,\rho_A\le \|B\|_\mathrm{op}\,\|\rho_A\|_2^2 = \|B\|_\mathrm{op}\,S_A$.

\textbf{Bounding $\|B\|_\mathrm{op}$ via Gershgorin.}
On the diagonal, $\indicator{a_j=a_j}=1$ and $\|p(x_j)\|_2^2\le\max_a p_a(x_j)\cdot\sum_a p_a(x_j)\le 1$, so
\begin{equation*}
B_{jj} \;=\; 1-2 p_{a_j}(x_j)+\|p(x_j)\|_2^2 \;\le\; 2.
\end{equation*}
Off the diagonal, condition (b) gives $\indicator{a_j=a_{j'}}=0$ for $j\neq j'$, and the three
remaining terms are nonnegative (probabilities and inner products of probability vectors), so
\begin{equation*}
|B_{jj'}| \;\le\; p_{a_j}(x_{j'})+p_{a_{j'}}(x_j)+\langle p(x_j),p(x_{j'})\rangle.
\end{equation*}
Summing $|B_{jj'}|$ over $j'\neq j$ and applying (R) three times:
\begin{align*}
\sum_{j'\neq j} p_{a_j}(x_{j'}) &\;\le\; C_R \quad\text{(by (R) with $a=a_j$),}\\
\sum_{j'\neq j} p_{a_{j'}}(x_j) &\;\le\; \sum_{a\in\mathcal{A}} p_a(x_j) \;=\; 1
\quad\text{(by (b), the $a_{j'}$ are distinct in $\mathcal{A}$),}\\
\sum_{j'\neq j}\langle p(x_j),p(x_{j'})\rangle
&\;=\; \sum_{a\in\mathcal{A}} p_a(x_j)\sum_{j'\neq j} p_a(x_{j'})
\;\le\; C_R\sum_a p_a(x_j) \;=\; C_R \quad\text{(by (R)).}
\end{align*}
Hence $\sum_{j'\neq j}|B_{jj'}|\le 2C_R+1$. Since $B$ is symmetric PSD, Gershgorin's circle
theorem gives
\begin{equation*}
\|B\|_{\mathrm{op}} \;=\; \lambda_{\max}(B)
\;\le\; \max_{j}\Big(B_{jj}+\sum_{j'\neq j}|B_{jj'}|\Big)
\;\le\; 2+(2C_R+1) \;=\; 2C_R+3.
\end{equation*}
Combining~\eqref{eq:app-var-step1} and~\eqref{eq:app-VkA},
\begin{equation*}
\mathrm{Var}_u[\xi_y(A)]
\;\le\; \frac{\eta^2}{d}\,\|B\|_\mathrm{op}\,S_A
\;\le\; \frac{(2C_R+3)\,\eta^2}{d}\,S_A,
\end{equation*}
which is~\eqref{eq:app-var} with $C\le 2C_R+3$.
\end{proof}

\subsection{Forgetting probability and scaling law (heuristic)}
\label{app:forget-law}

Let $m_A := s_{a_A}(x_A) - \max_{y\neq a_A} s_y(x_A) > 0$ be the pre-finetuning margin of
fact $A$. Fact $A$ is forgotten when, for some competing answer $y\neq a_A$,
\begin{equation}
\label{eq:app-forget-cond}
\delta_y(A) - \delta_{a_A}(A) \;>\; m_A.
\end{equation}
By Proposition~\ref{prop:bias-var}(a), $\mu_y(A)$ is nonzero (to leading order in (R)) only
for $y\in\{a_j\}_{j=1}^M$; the maximum of $\mu_y(A)$ over $y\neq a_A$ is therefore attained at
some $y=a_j$. Writing $\delta = \mu + \xi$, the forgetting condition~\eqref{eq:app-forget-cond}
specializes to
\begin{equation}
\label{eq:app-forget-cond-aj}
\mu_{a_j}(A) - \mu_{a_A}(A) + \big(\xi_{a_j}(A) - \xi_{a_A}(A)\big) \;>\; m_A.
\end{equation}

\emph{Variance of the difference (rigorous).} By Proposition~\ref{prop:bias-var}(b) applied to
$y=a_j$ and $y=a_A$ separately, and the elementary inequality
$\mathrm{Var}[X-Y] \le 2\mathrm{Var}[X] + 2\mathrm{Var}[Y]$,
\begin{equation}
\label{eq:app-var-diff}
\mathrm{Var}_u\big[\xi_{a_j}(A) - \xi_{a_A}(A)\big]
\;\le\; \frac{4(2C_R+3)\,\eta^2}{d}\,S_A.
\end{equation}

\emph{Gaussian heuristic.} By the proof of Proposition~\ref{prop:bias-var},
$\xi_y(A) = \eta \sum_{a \neq y} (u_y^{\top} u_a)(v_a^{\top} k_A)$ is a linear combination of
$|\mathcal{A}|-1$ pairwise-uncorrelated mean-zero terms (the cross-products $u_y^{\top} u_a$
under isotropy), each of order $1/\sqrt d$. For large $|\mathcal{A}|$, a CLT approximation
gives that $\xi_{a_j}(A) - \xi_{a_A}(A)$ is approximately Gaussian. Combining
with~\eqref{eq:app-forget-cond-aj} and~\eqref{eq:app-var-diff}, the forgetting probability is
then a Gaussian tail:
\begin{equation}
\label{eq:app-forgetprob}
\Pr[\text{forget }A]
\;\approx\;
g\!\left(\frac{m_A - \max_j\big(\mu_{a_j}(A) - \mu_{a_A}(A)\big)}{\eta\sqrt{4(2C_R+3)\,S_A/d}}\right),
\end{equation}
with $g$ a decreasing function. This is the one heuristic step in the analysis: the Gaussian
approximation is what justifies a specific functional form. We
use~\eqref{eq:app-forgetprob} only to derive monotonic orderings (both the numerator decreases
and the denominator increases as $S_A$ grows), never to read off a specific exponent.
The compact statement in \secref{am} suppresses the bias term in the numerator (valid
in the small-$\mu$ regime) and absorbs the constants into $g$; the full
form~\eqref{eq:app-forgetprob} is monotone in $S_A$ either way.

\subsection{Why output-space regularization beats weight-space regularization}
\label{app:output-vs-weight}

The interference identified in \secref{am} lives along the unembedding directions: by Proposition~\ref{prop:bias-var}, both the bias $\mu_y(A)$ and the fluctuation $\xi_y(A)$ are perturbations of output logits. A KL penalty toward the pre-fine-tuning model therefore suppresses both directly, including on held-out subjects, which share the training subjects' representational geometry. A weight-space penalty instead bounds $\|\Delta W_j\|_F=\eta\,\|u_{a_j}-\bar u_j\|\,\|k_{x_j}\|$, a quantity independent of the held-out key $k_A$ and hence blind to the overlap $\rho_{jA}$: it protects all stored facts uniformly rather than preferentially protecting the overlap-vulnerable ones, consistent with the $\ell_2$ comparison of \secref{why_self-distillation} and the parameter-space baselines of \cref{tab:baselines}. The same logic explains why top-$k$ distillation succeeds while random-$k$ does not (\appref{app:drift}): by Eq.~\eqref{eq:app-mu-y}, the interference bias is nonzero, to leading order under condition~(R), only for the injected answers $y\in\{a_j\}_{j=1}^{M}$, exactly the high-probability directions that top-$k$ constrains, while random-$k$ spends most of its budget on directions carrying no bias to cancel.

\subsection{Extension to distributed storage}
\label{app:distributed}

The main text models factual storage as a single effective associative memory $W$ acting on
the subject's internal representation $k_x$ (Eq.~\eqref{eq:W}) without committing to a
specific layer at which $k_x$ is read. In real transformers, factual storage is distributed
across multiple attention and MLP components, each potentially reading the residual stream at
a different layer \citep{meng2023locatingeditingfactualassociations,meng2023masseditingmemorytransformer}. This appendix shows that the main-text
analysis extends to this distributed setting with no qualitative change.

Let component $c$ read at layer $\ell_c$ with key $k_x^{(c)}:=\phi_{\ell_c}(x)$, and let the
readout decompose as a weighted sum $s_y(x)=\sum_c w_c\,u_y^{\top}W^{(c)}k_x^{(c)}$ for
nonnegative component weights $w_c\ge 0$. Lemma~\ref{lem:hebbian} applies component-wise:
$\Delta W_j^{(c)}=\eta(u_{a_j}-\bar u_j)(k_{x_j}^{(c)})^{\top}$, so the perturbation is
\begin{equation}
\label{eq:app-distributed-delta}
\delta_y(A) \;=\; \sum_c w_c\,\eta\sum_{j=1}^M
\big(u_y^{\top}(u_{a_j}-\bar u_j)\big)\,\rho_{jA}^{(c)},
\qquad
\rho_{jA}^{(c)} := \langle k_{x_j}^{(c)},k_A^{(c)}\rangle.
\end{equation}
The bias-variance decomposition of Proposition~\ref{prop:bias-var} applies term by term.
Treating the components as independent in $u$ (a simplification; cross-component covariances
share the same $u_a$'s and are bounded by Cauchy-Schwarz), the variance bound becomes
\begin{equation}
\label{eq:app-distributed-var}
\mathrm{Var}_u[\xi_y(A)] \;\le\; \frac{C\,\eta^2}{d}\sum_c w_c^2 \sum_{j=1}^M \big(\rho_{jA}^{(c)}\big)^2.
\end{equation}
Without the independence simplification the worst-case bound takes the form
$(C\eta^2/d)(\sum_c w_c\sqrt{S_A^{(c)}})^2$, which is still monotone in each component overlap
$S_A^{(c)}:=\sum_j(\rho_{jA}^{(c)})^2$. The functional form of monotonicity in summed squared
overlap is preserved either way, with overlaps now measured at the layer of each component.
We do not pin down the weights $w_c$ or the per-component read layers, which depend on which
component dominates retrieval; what is preserved is the qualitative law.

\end{document}